%% file: main.tex
\documentclass{article}

    \PassOptionsToPackage{numbers, compress}{natbib}

 \usepackage[preprint]{neurips_2025}

\usepackage[utf8]{inputenc} 
\usepackage[T1]{fontenc}    
\usepackage{hyperref}       
\usepackage{url}            
\usepackage{booktabs}       
\usepackage{amsfonts}       
\usepackage{nicefrac}       
\usepackage{microtype}      
\usepackage{xcolor}         
\usepackage{graphicx}       
\input{macros}

\title{Does This Gradient Spark Joy?}

\author{%
  Ian Osband \\
  Google DeepMind \\
  \texttt{iosband@google.com} \\
}

\begin{document}

\maketitle
\begin{abstract}
Policy gradient computes a backward pass for every sample, even though the backward pass is expensive and most samples carry little learning value.
The Delightful Policy Gradient (DG) provides a forward-pass signal of learning value: \emph{delight}, the product of advantage and surprisal (negative log-probability).
We introduce the \emph{Kondo gate}, which compares delight against a compute price and pays for a backward pass only when the sample is worth it, thereby tracing a quality--cost Pareto frontier.
In bandits, zero-price gating preserves useful gradient signal while removing perpendicular noise, and delight is a more reliable screening signal than additive combinations of value and surprise.
On MNIST and transformer token reversal, the Kondo gate skips most backward passes while retaining nearly all of DG's learning quality, with gains that grow as problems get harder and backward passes become more expensive.
Because the gate tolerates approximate delight, a cheap forward pass can screen samples before expensive backpropagation, suggesting a speculative-decoding-for-training paradigm.
\end{abstract}

\section{Introduction}
\label{sec:intro}

Policy gradient methods compute a backward pass for every sample~\citep{williams1992simple}.
The backward pass is often substantially more expensive than the forward pass, yet most samples carry little learning value: a sample that confirms an action the policy already prefers, or punishes one it already avoids, contributes little to progress.
Blunders and breakthroughs receive the same compute budget.

The Delightful Policy Gradient (DG) provides a forward-pass signal of learning value~\citep{osband2025delightful}.
DG weights each gradient term by \emph{delight}, the product of advantage and surprisal, so rare successes are emphasized and uninformative outcomes are suppressed.
Delight is available from the forward pass before any gradient computation.
The companion papers show that DG improves over PG and PPO across staleness, actor bugs, reward corruption, and rare discovery~\citep{osband2025delightful,osband2025distributed}.
This suggests a sharper question: if the forward pass already identifies which samples are worth learning from, should we compute every backward pass at all?

We propose the \emph{Kondo gate}\footnote{Named for Marie Kondo's organizing principle: keep what sparks joy, discard what does not.}: keep the samples that spark joy, skip the rest.
For each sample, the learner compares delight against a compute price~$\lambda$ and draws a Bernoulli gate.
The probability of a backward pass increases with delight and decreases with price.
Sweeping the price traces a quality--cost Pareto frontier; in practice, we set $\lambda$ adaptively to target a fraction $\rho$ of backward passes.

The paper builds this argument progressively.
On MNIST at $\rho = 3\%$, the Kondo gate nearly matches full DG in forward-pass space despite using only $3\%$ of backward passes, and dominates by two orders of magnitude in backward-pass space (Section~\ref{sec:mnist}).
In tabular bandits, we show that gating preserves useful gradient direction while eliminating perpendicular noise, and explain why delight is a better screening signal than additive combinations of value and surprise (Section~\ref{sec:bandit}).
The same analysis exposes the main limitation: a gambling regime in which high reward variance creates false delight on rare suboptimal actions (Section~\ref{sec:gambling}).
On transformer token reversal, the Kondo gate solves harder problems at equal backward compute, and its savings survive approximate delight estimation, pointing to a speculative-decoding-for-training paradigm (Section~\ref{sec:results}).

\section{The Kondo Gate}
\label{sec:method}

Standard policy gradient uses per-sample updates $g_t = U_t \nabla_\theta \log \pi_\theta(A_t \mid \mathcal{H}_t)$, computing a backward pass for every sample regardless of learning value~\citep{williams1992simple}.
The Delightful Policy Gradient (DG) scores each sample by delight $\delight_t = U_t \cdot \surp_t$, the product of advantage and surprisal $\surp_t = -\log \pi_\theta(A_t \mid \mathcal{H}_t)$~\citep{osband2025delightful}.
Delight is largest when a rare action succeeds, negative when a rare action fails, and small for actions the policy already expects.
Delight is available from the forward pass before any gradient computation.

DG uses delight to \emph{weight} gradient terms, but it still computes every backward pass.
The Kondo gate takes the next step: if delight already scores how much the learner can gain from a sample, then delight should also decide whether that sample deserves a backward pass at all.

\subsection{Implementation}
\label{sec:kondo_implementation}

Suppose each backward pass carries a price $\lambda \ge 0$.
For a single sample, we choose a gate probability $w \in [0,1]$ by maximizing
\begin{equation}
\label{eq:kondo_objective}
\max_{w \in [0,1]}\, \underbrace{\delight \, w}_{\text{learning value}} - \underbrace{\lambda \, w}_{\text{compute cost}} + \underbrace{\mix\, H(w)}_{\text{uncertainty}},
\end{equation}
where $H(w)$ is binary entropy and $\mix > 0$ is a temperature (derivation in Appendix~\ref{app:gate_derivation}).
The unique maximizer is $w^* = \sigma\!\big((\delight - \lambda) / \mix\big)$: the probability of paying for a backward pass increases with delight and decreases with price.
On a real computer the cost is all-or-nothing, so we sample $G_t \sim \mathrm{Ber}(w^*_t)$: compute when $G_t = 1$, skip when $G_t = 0$.
This is the Kondo gate.
Two limits anchor intuition: at $\mix \to 0$ the gate is a hard threshold $\mathbb{I}\{\delight > \lambda\}$, keeping only the most informative samples; at $\mix \to \infty$ the gate is constant, recovering standard PG up to a uniform rescaling.
In practice, rather than tuning $\lambda$ directly, we set it adaptively to target a gate rate $\rho$, the fraction of samples that receive backward passes.

\algnewcommand{\algorithmichyperparameters}{\textbf{Hyperparameters:}}
\algnewcommand{\Hyperparameters}{\item[\algorithmichyperparameters]}
\begin{center}
\begin{minipage}{0.85\columnwidth}
\begin{algorithm}[H]
\caption{Delightful Policy Gradient with Kondo Gate}
\label{alg:kondo}
\begin{algorithmic}[1]
\Require Batch $\mathcal{B}$, policy $\pi_\theta$
\Hyperparameters gate rate $\rho \in (0,1]$ \textbf{or} price $\lambda \ge 0$; temperature $\mix > 0$
\For{$t \in \mathcal{B}$}
    \State $\surp_t \gets -\log \pi_\theta(A_t \mid \mathcal{H}_t)$ \Comment{Surprisal (forward pass)}
    \State $\delight_t \gets U_t \cdot \surp_t$ \Comment{Delight (forward pass)}
\EndFor
\If{gate rate $\rho$ given}
    \State $\lambda \gets \mathrm{quantile}_{1-\rho}\!\big(\{\delight_t\}_{t \in \mathcal{B}}\big)$ \Comment{Set price to target gate rate $\rho$}
\EndIf
\State $\Delta\theta \gets 0$
\For{$t \in \mathcal{B}$}
    \State $G_t \sim \mathrm{Ber}\!\big(\sigma((\delight_t - \lambda) / \mix)\big)$ \Comment{Kondo gate}
    \If{$G_t = 0$} \textbf{continue} \Comment{Skip backward pass} \EndIf
    \State $\Delta\theta \gets \Delta\theta + U_t \, \nabla_\theta \log \pi_\theta(A_t \mid \mathcal{H}_t)$ \Comment{Backward pass}
\EndFor
\State \Return $\Delta\theta$
\end{algorithmic}
\end{algorithm}
\end{minipage}
\end{center}

Relative to DG, the Kondo gate changes only one thing: some gradient terms are not merely downweighted, but never computed.
The next section asks how much learning quality survives when most backward passes are removed.

\subsection{Why Delight, Not Simpler Priority Signals?}
\label{sec:kondo_why_delight}

A backward pass should be spent on samples that are both useful and non-redundant.
Advantage alone measures usefulness but ignores rarity: a common success and a rare breakthrough receive similar priority, even though the common success changes the policy little.
Surprisal alone measures rarity but ignores value: it prioritizes novelty for its own sake, including surprising failures that the learner has already learned to avoid.
Additive combinations $\alpha U + (1-\alpha) \surp$ interpolate between these two mistakes and require regime-dependent tuning of $\alpha$.

Delight targets the intersection rather than the union.
Because it multiplies advantage and surprisal, delight is large only when a sample is both valuable and unexpected under the current policy.
This makes it a natural screening signal for backward compute: keep the rare successes that teach the learner something new, skip the samples whose gradient is either redundant or actively unhelpful.
Section~\ref{sec:bandit} makes this precise in tabular bandits, showing why delight is more reliable than additive alternatives and identifying the gambling regime in which it fails.

\section{MNIST Diagnostic}
\label{sec:mnist}

We begin with MNIST, the simplest neural-network RL problem: a contextual bandit with ten actions, immediate reward, and a two-layer MLP.
The setup matches the companion paper~\citep{osband2025delightful}; only the gradient gating differs.
We parameterize the Kondo gate by a target \emph{gate rate} $\rho \in (0,1]$.
On each batch, the price $\lambda$ is set to the $(1{-}\rho)$-quantile of delight, so that roughly a fraction $\rho$ of samples receive a backward pass.
Setting $\rho = 1$ recovers full DG; small $\rho$ skips most backward passes.

\subsection{Core Results}
\label{sec:core_results}

Figure~\ref{fig:mnist} shows the main result at $\rho = 0.03$.
In forward-pass space~(a), the Kondo gate nearly matches full DG and both dominate PG: gating preserves nearly all of the useful learning signal.
In backward-pass space~(b), the Kondo gate reaches the same error using two orders of magnitude fewer backward passes.
The green curve simply ends earlier: nearly the same quality at a small fraction of the backward cost.

\begin{figure}[ht!]
\centering
\begin{subfigure}[t]{0.48\columnwidth}
    \centering
    \includegraphics[width=\linewidth]{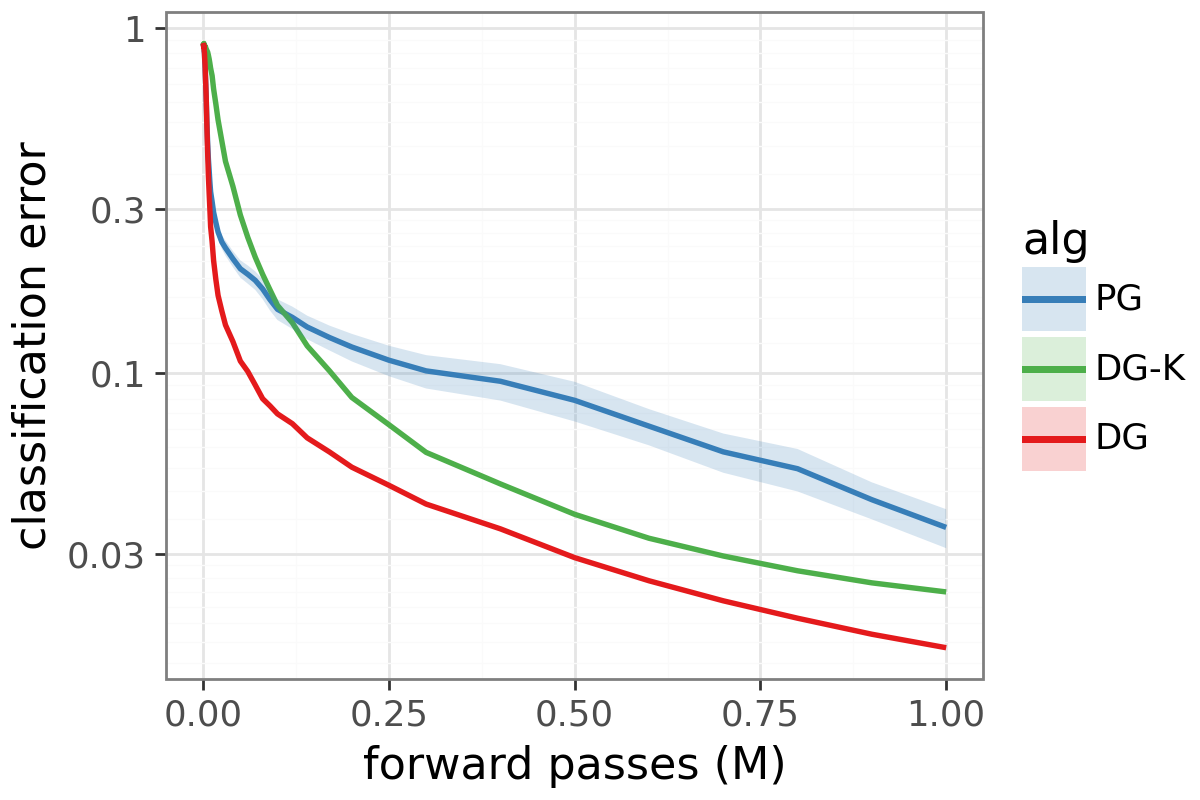}
    \caption{Forward passes: DG-K $\approx$ DG $\gg$ PG.}
    \label{fig:mnist_forward}
\end{subfigure}
\hfill
\begin{subfigure}[t]{0.48\columnwidth}
    \centering
    \includegraphics[width=\linewidth]{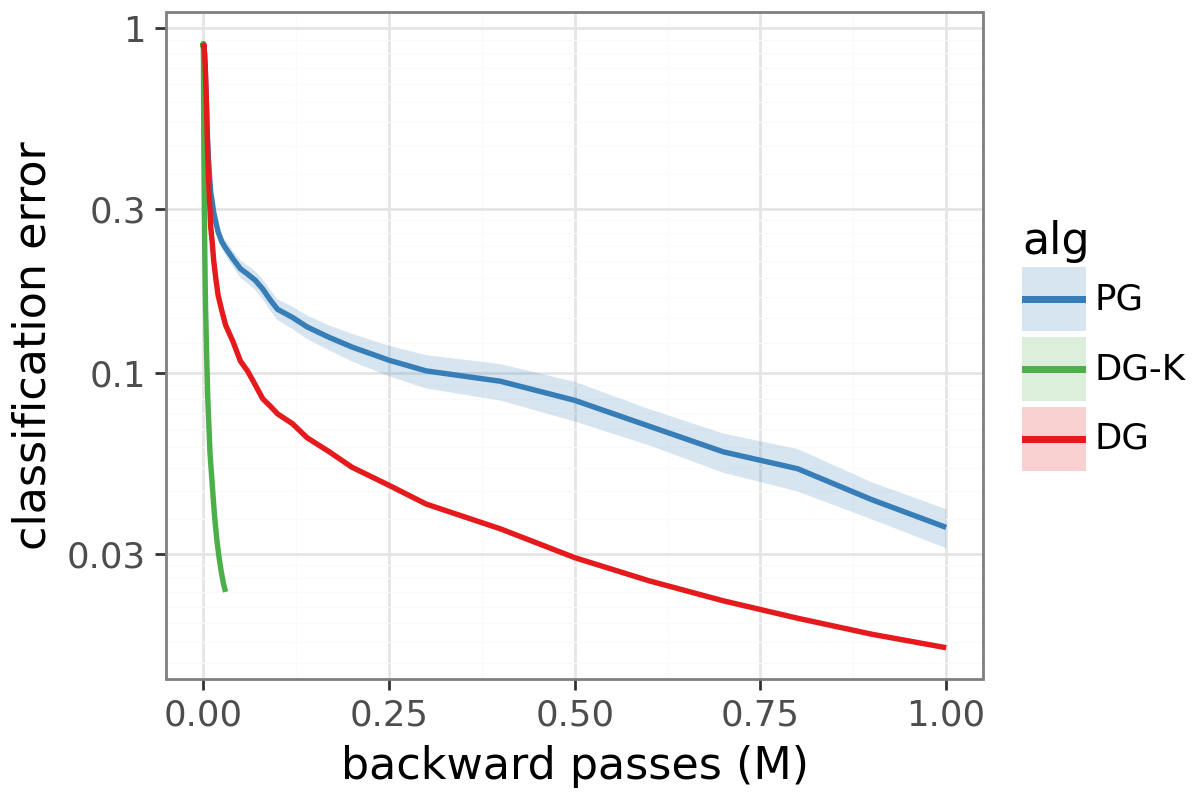}
    \caption{Backward passes: DG-K $\gg$ DG $\gg$ PG.}
    \label{fig:mnist_backward}
\end{subfigure}
\caption{PG, DG, and Kondo gate (DG-K) at $\rho = 0.03$ on MNIST.
(a)~The Kondo gate matches DG despite computing 3\% of backward passes.
(b)~It dominates by two orders of magnitude in backward-pass space.
Averaged over 30 seeds; shading shows $\pm 1$ standard error.}
\label{fig:mnist}
\end{figure}

Figure~\ref{fig:mnist_sweep} sweeps the gate rate $\rho$ from $0.01$ to $1.0$, with learning rate tuned per $\rho$.
In forward-pass space~(a), all gate rates converge to nearly the same final error ($\sim 0.5\%$): over this range, aggressive gating costs little in quality.
In backward-pass space~(b), the fan opens: $\rho = 0.01$ reaches any given error level with $\sim 100\times$ fewer backward passes than $\rho = 1.0$.

\begin{figure}[ht!]
\centering
\begin{subfigure}[t]{0.48\columnwidth}
    \centering
    \includegraphics[width=\linewidth]{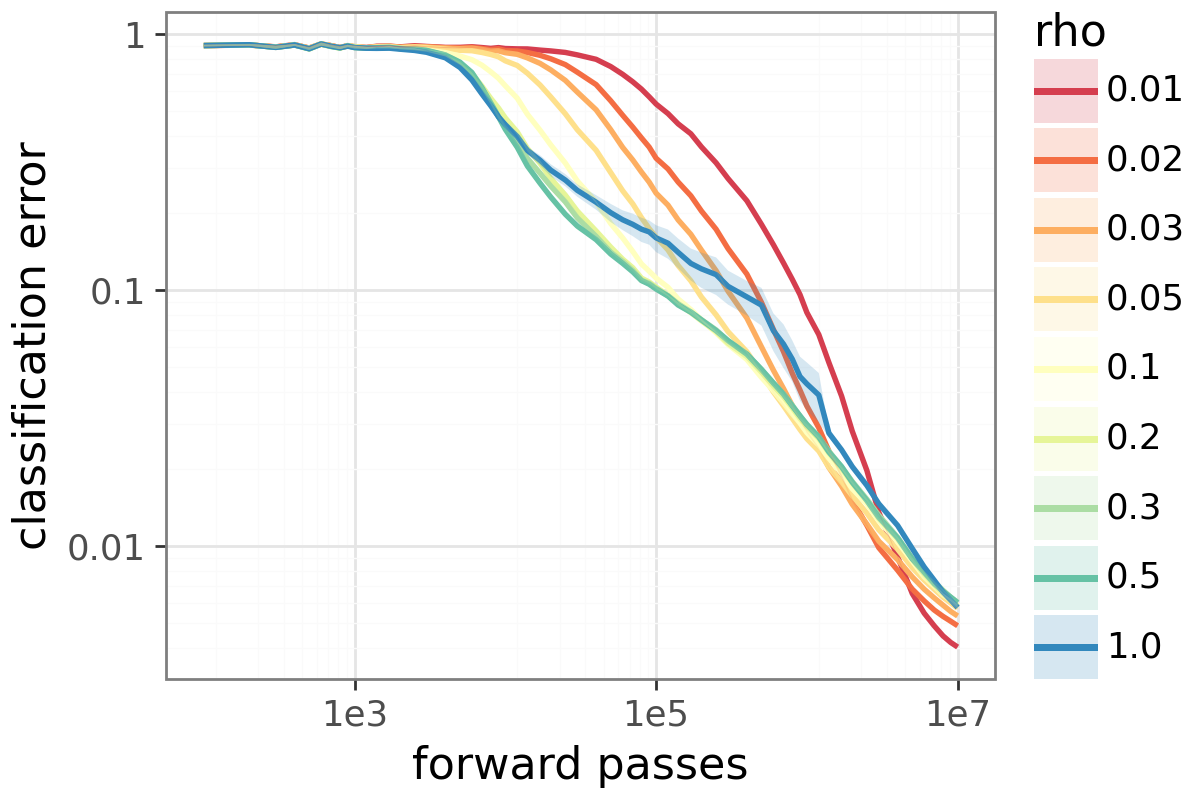}
    \caption{Forward steps: all $\rho$ converge to similar error.}
    \label{fig:sweep_forward}
\end{subfigure}
\hfill
\begin{subfigure}[t]{0.48\columnwidth}
    \centering
    \includegraphics[width=\linewidth]{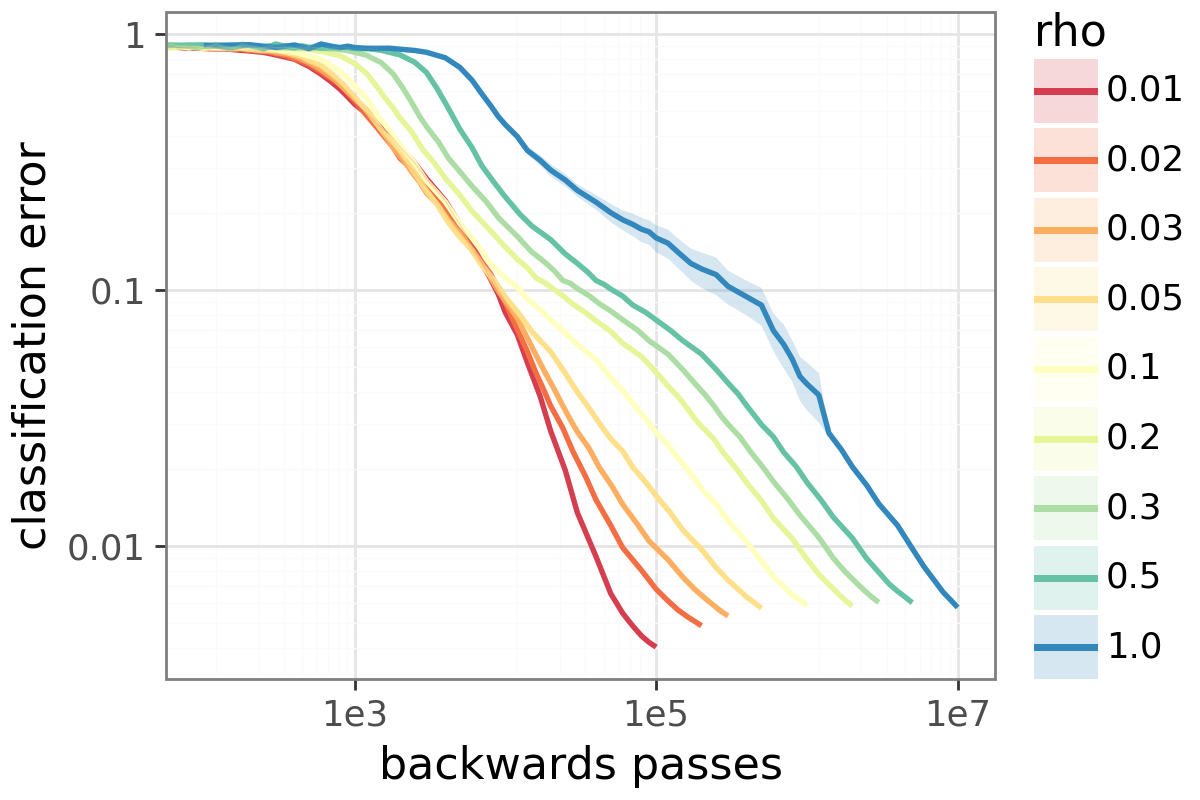}
    \caption{Backward steps: $100\times$ fewer at $\rho = 0.01$.}
    \label{fig:sweep_backward}
\end{subfigure}
\caption{Gate rate sweep ($\rho \in \{0.01, \ldots, 1.0\}$), learning rate tuned per $\rho$.
(a)~All gate rates converge to $\sim 0.5\%$ error eventually.
(b)~In backward-step space, smaller $\rho$ reaches any error with orders-of-magnitude fewer backward passes.}
\label{fig:mnist_sweep}
\end{figure}

\subsection{Compute Efficiency and Approximate Delight}
\label{sec:compute_efficiency}

The practical value of skipping backward passes depends on their cost.
In many settings the backward pass is $2$--$4\times$ more expensive than the forward pass, and the gap can be larger in large-scale sequence-model training.
Figure~\ref{fig:compute_efficiency} measures total compute (forward $+$ backward $\times$ cost ratio) to reach $5\%$ test error, normalized to PG.
Even at cost ratio $0$ (backward passes are free), DG-K improves over PG through better per-sample learning.
As the backward/forward cost ratio grows, the Kondo gate's speedup grows linearly: at a typical ratio of $4\times$, DG-K is $6\times$ faster than PG to reach the same error.
DG's speedup is constant ($\sim 2\times$) because it still computes every backward pass.

\begin{figure}[ht!]
\centering
\includegraphics[width=0.6\columnwidth]{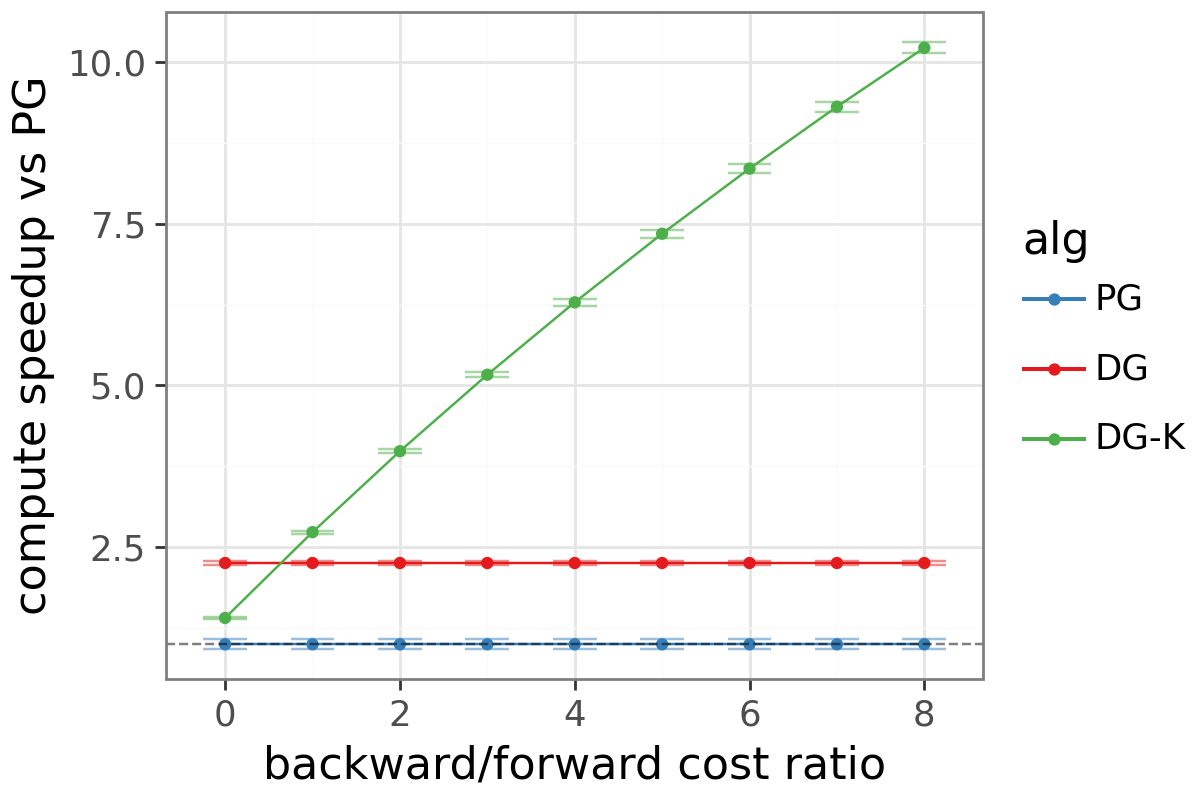}
\caption{Compute speedup vs PG to reach 5\% test error on MNIST, as a function of the backward/forward cost ratio.
DG's advantage is constant ($\sim 2\times$, better learning).
DG-K's advantage grows linearly with backward cost (fewer backward passes).
At a typical ratio of $4\times$, the Kondo gate is $6\times$ faster than PG.}
\label{fig:compute_efficiency}
\end{figure}

The gate's decision requires only a forward pass, not a full-precision one.
If delight can be approximated---via quantized inference, a distilled model, or cached values---the \emph{effective} backward/forward cost ratio is much larger than $2\times$.
A cheap screening pass followed by an expensive full pass mirrors speculative decoding~\citep{leviathan2023fast}, but for training rather than inference.
Figure~\ref{fig:robust} tests this by injecting noise into the delight signal~(a) and the forward-pass logits~(b).
DG tolerates roughly $50\%$ relative delight noise and logit noise up to $\sigma_Z \approx 1$ before degrading; DG-K is more fragile in both cases.
Approximate delight is therefore sufficient: screening need not be perfect to capture most of the compute savings.

\begin{figure}[ht!]
\centering
\begin{subfigure}[t]{0.48\columnwidth}
    \centering
    \includegraphics[width=\linewidth]{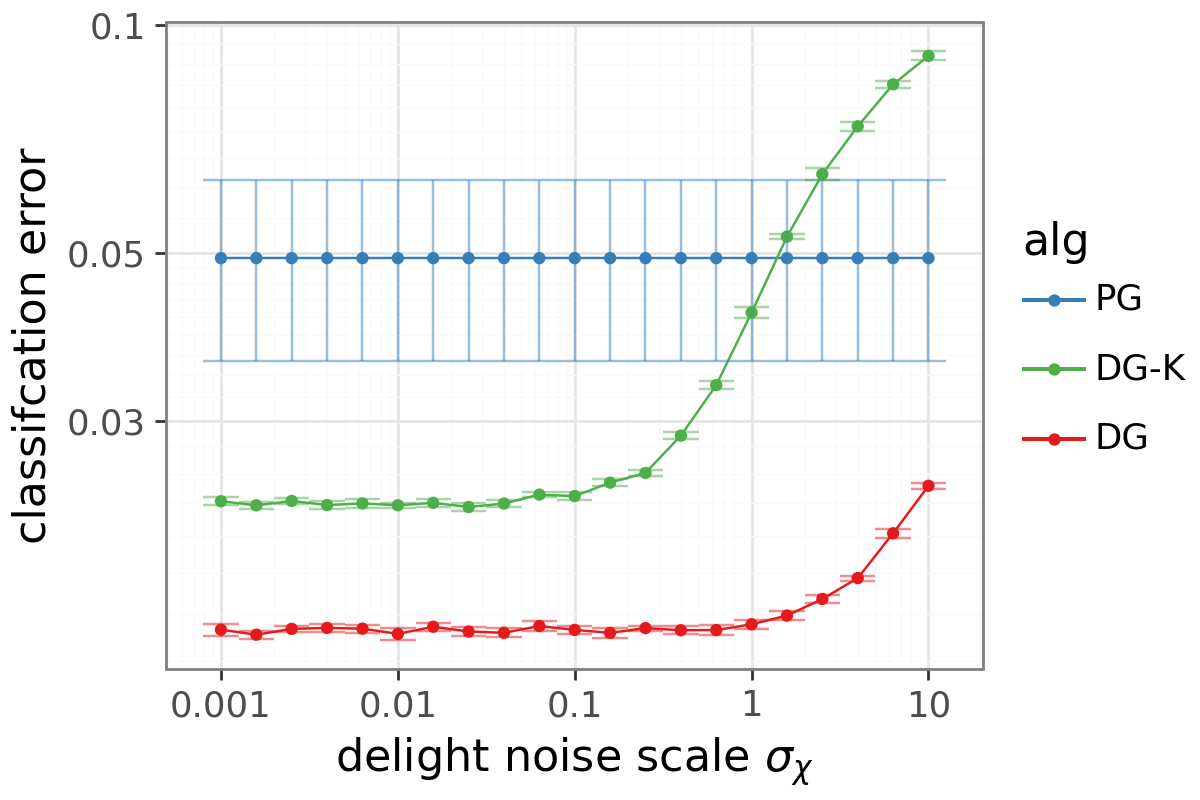}
    \caption{Delight noise (relative): DG tolerates $\sim 50\%$.}
    \label{fig:robust_delight}
\end{subfigure}
\hfill
\begin{subfigure}[t]{0.48\columnwidth}
    \centering
    \includegraphics[width=\linewidth]{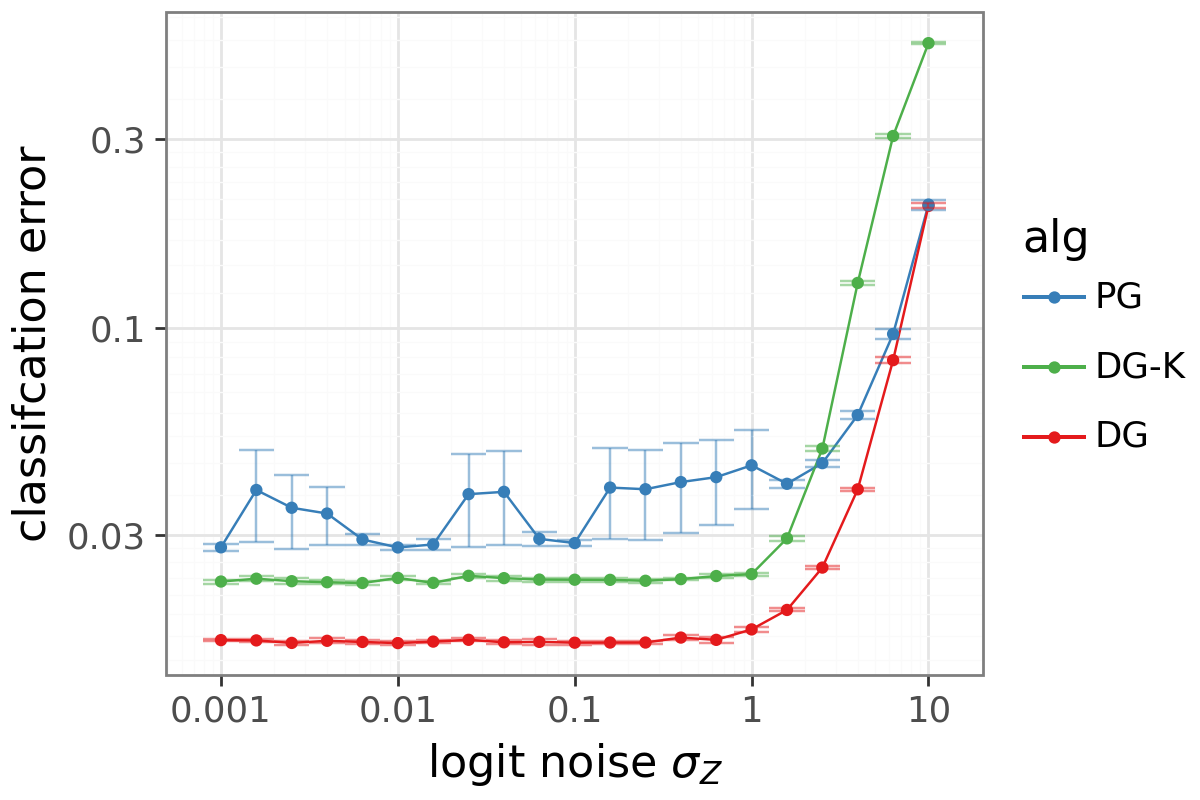}
    \caption{Logit noise: DG robust to $\sigma_Z \approx 1$.}
    \label{fig:robust_logit}
\end{subfigure}
\caption{Noise robustness on MNIST.
(a)~Delight noise scaled relative to $\mathrm{std}(\chi)$: DG tolerates $\sim 50\%$; DG-K degrades earlier.
(b)~Logit noise: DG is robust until $\sigma_Z \approx 1$; DG-K degrades faster.
Both validate that approximate forward passes and approximate delight preserve the gate's value.}
\label{fig:robust}
\end{figure}

Taken together, the MNIST results establish the empirical fact behind the paper: most backward passes can be removed with little loss in learning quality, and the resulting speedup grows with backward-pass cost.
The remaining question is why.
What part of the gradient survives the gate, why is delight the right priority signal, and when should delight-based screening fail?
The next section turns to tabular bandits, where these questions can be answered exactly.

\section{Tabular Analysis}
\label{sec:bandit}

We now answer the three questions raised by MNIST in a setting with exact gradients and no function-approximation error.
First, what useful part of the gradient survives the gate?
Second, why is delight the right priority signal rather than a simpler combination of value and surprise?
Third, when should delight-based screening fail?
Bandits make all three questions analytically tractable, while remaining close enough to MNIST that the predictions can be checked empirically.

\subsection{Pareto Improvement and Priority Signal}
\label{sec:theory_pareto}
\label{sec:priority}

The first question is what the gate keeps.
In a single-context softmax bandit, correct-action gradients are parallel to $\nabla J$ and carry no perpendicular noise.
Incorrect-action gradients contribute $\Theta(1)$ perpendicular component and only $\Theta(p)$ cosine with $\nabla J$.
The batch cosine scales as $\Theta(p\sqrt{B})$ when $p^2 B \ll 1$: unless $B \gg 1/p^2$, the policy-gradient estimate is nearly random.
In this setting, zero-price gating is a Pareto improvement in gradient geometry: it preserves the useful signal, eliminates perpendicular noise, and reduces backward-pass cost.

\begin{proposition}[Kondo gate Pareto improvement]
\label{prop:pareto_formal}
Under a $K$-armed bandit with softmax policy $\pi = \mathrm{softmax}(z)$, deterministic reward $R = \mathbb{I}\{A = y^*\}$, and correct-action probability $p = \pi(y^*)$, consider the zero-price hard gate that keeps samples with $\delight > 0$ and skips those with $\delight < 0$:
\begin{enumerate}
\item Direction preserved: $\E[g_\mathrm{KG}] \propto \nabla_z J$.
\item Perpendicular variance eliminated: $\Var_\perp(g_\mathrm{KG}) = 0$.
\item Compute reduced: $pB$ backward passes instead of $B$.
\item Backward cost: PG needs $\Omega(1/p^2)$ backward passes for $\cos = \Theta(1)$; KG achieves $\cos = 1$ with probability $\ge 1-\delta$ from $O(\log(1/\delta))$ backward passes.
\end{enumerate}
\end{proposition}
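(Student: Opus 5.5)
We first compute the per-sample score gradients in logit space. With $\pi=\mathrm{softmax}(z)$, $\nabla_z \log\pi(a) = e_a - \pi$, and $J = \E[R] = p$, so $\nabla_z J = p(e_{y^*}-\pi)$. The advantage is $U = R - b$; we take the natural baseline $b = p$. Then $U = 1-p>0$ on the correct action and $U=-p<0$ on any incorrect action. Since surprisal $\surp>0$ whenever $\pi(a)<1$, the sign of delight equals the sign of $U$. Hence the zero-price hard gate keeps exactly the samples with $A=y^*$ and skips all others. This sign observation is the structural fact everything rests on.

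\textbf{Items 1--3.} On a kept sample the update is $g = (1-p)(e_{y^*}-\pi)$, a deterministic vector parallel to $\nabla_z J$. Writing $g_\mathrm{KG}$ per sample as $\mathbb{I}\{A=y^*\}(1-p)(e_{y^*}-\pi)$ gives $\E[g_\mathrm{KG}] = p(1-p)(e_{y^*}-\pi) = (1-p)\nabla_z J$, which proves item 1. Every realization of $g_\mathrm{KG}$ lies on the ray spanned by $\nabla_z J$. Its projection onto the orthogonal complement is identically zero, so $\Var_\perp=0$, which proves item 2. The number of kept samples in a batch is $\mathrm{Bin}(B,p)$, with mean $pB$, which gives item 3.

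\textbf{Item 4, KG side.} Any batch with at least one kept sample yields a sum that is a positive multiple of $\nabla_z J$, so its cosine is exactly $1$. Counting backward passes, the very first computed backward pass already achieves $\cos=1$. The $O(\log(1/\delta))$ statement should therefore be read as a count of backward passes: $1$ suffices deterministically, and $\log(1/\delta)$ is a comfortable upper bound. If one instead wants forward samples, $\Pr[\text{no kept sample in } n] = (1-p)^n \le \delta$ once $n\ge \log(1/\delta)/p$. I would state both readings to avoid ambiguity.

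\textbf{Item 4, PG side.} This is the main obstacle: we need a lower bound on the cosine of the sum $S=\sum_{i\le B} U_i(e_{A_i}-\pi)$.

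I would decompose $S = B\,\E[g] + N$, where $\E[g]=\nabla_z J$ has norm $\|\nabla_z J\| = p\|e_{y^*}-\pi\| = \Theta(p)$ in the regime $p\to0$ with, say, $\pi$ spread so that $\|e_{y^*}-\pi\|=\Theta(1)$. For incorrect samples, $U=-p$ is small. I would therefore not use a fixed baseline. Instead I would follow the paper's framing: incorrect gradients carry $\Theta(1)$ perpendicular component, for example with the unbaselined advantage $U=R$ or with a normalized advantage. I would fix the convention that makes incorrect-action terms $\Theta(1)$ in norm, and note that the claim is stated for that convention.

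Then $\E\|N\|^2 = B\,\mathrm{tr}\,\Var(g) = \Theta(B)$, so $\|N\| = \Theta(\sqrt B)$ with constant probability (Paley--Zygmund, or a second-moment argument). By Cauchy--Schwarz, $\cos(S,\nabla J)\le \frac{B\|\nabla J\| + |\langle N,\hat u\rangle|}{\|S\|}$, where $\hat u$ is the unit vector along $\nabla_z J$. This gives $\cos = O(p\sqrt B)$ whenever $p\sqrt B\ll1$. Requiring $\cos=\Theta(1)$ then forces $B=\Omega(1/p^2)$. Since PG computes a backward pass for every sample, this is $\Omega(1/p^2)$ backward passes.

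The delicate part is making the perpendicular-noise constant uniform in $K$ and $\pi$. I would handle this by restricting to the regime stated in the text, namely $\Theta(1)$ perpendicular norm for incorrect actions, and treating it as an assumption.
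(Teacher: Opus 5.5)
Your items 1--3 and the KG half of item 4 are essentially the paper's argument. The paper keeps a general baseline $b\in(0,1)$ and gets $\E[g_\mathrm{KG}]=(1-b)\nabla_z J$. Your remark that a single kept backward pass already gives $\cos=1$, with about $\log(1/\delta)/p$ forward samples needed to see one, is a cleaner reading than the paper's ``given at least one correct draw.''

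The gap is in the PG half of item 4, and it is exactly the baseline question you leave open. With the $b=p$ you used in items 1--3, the PG claim is false. Incorrect-action terms are $-p\,\phi_\pi(a)$, so the perpendicular part of the batch sum has norm only $\Theta(p\sqrt{B})$. A single correct draw, which contributes norm $\Theta(1)$ along $\nabla_z J$, therefore already gives $\cos=1-O(p)$. PG with $b=p$ reaches $\cos=\Theta(1)$ from $O(\log(1/\delta)/p)$ backward passes, not $\Omega(1/p^2)$.

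Neither of your substitutes repairs this. With $U=R$ (i.e.\ $b=0$), incorrect-action terms vanish identically, so the PG sum is itself a positive multiple of $\nabla_z J$; moreover $\chi=0$ on those samples, so the gate rule as stated does not apply. The normalized advantage $(R-p)/\sqrt{p(1-p)}$ is just a positive rescaling of $b=p$ and gives identical cosines.

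What makes item 4 true is the paper's convention: Assumption~\ref{assump:bandit} takes a baseline $b\in(0,1)$, implicitly bounded away from $0$ as $p\to 0$. The gate only sees $\operatorname{sgn}(U)$, so your items 1--3 go through verbatim with $1-b$ in place of $1-p$. Adopt this convention throughout rather than switching between items. Then each incorrect term $-b\,\phi_\pi(a)$ has $\Theta(1)$ perpendicular component by Lemma~\ref{lem:geometry}. The paper finishes by citing the count in Remark~\ref{rem:arithmetic}: signal $\Theta(pB)$, perpendicular noise $\Theta(\sqrt{B})$.

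Your second-moment route is a more careful version of that count, but two details need fixing.
\begin{enumerate}
\item The displayed ``Cauchy--Schwarz'' bound is really the identity $\cos=\langle S,\hat u\rangle/\|S\|$, combined with $\|S\|\ge\|\Pi_\perp S\|=\|\Pi_\perp N\|$.
\item The parallel fluctuation $\langle N,\hat u\rangle$ has standard deviation $\Theta(\sqrt{pB})$, not $O(p\sqrt{B})$, because correct draws contribute $\Theta(1)$ along $\hat u$ with probability $p$.
\end{enumerate}
Using Paley--Zygmund for $\|\Pi_\perp N\|$ and Chebyshev for $\langle N,\hat u\rangle$, you get, with constant probability, $\cos\lesssim p\sqrt{B}+\sqrt{p}$. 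This still forces $B=\Omega(1/p^2)$ as $p\to 0$.
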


This explains why aggressive gating can preserve learning quality.
The next question is why delight is the right signal to threshold, rather than a simpler additive score built from advantage and surprisal.
At $\lambda = 0$, gating on $\delight < 0$ is equivalent to gating on $U < 0$, so the product structure does not matter for sign alone.
It matters once we must rank \emph{among} positive-delight samples, as happens with $\lambda > 0$, multiple contexts, or approximate screening.

\begin{proposition}[Delight is sign-consistent; additive mixes can mis-rank]
\label{prop:delight_dominance}
Under Assumption~\ref{assump:bandit}, delight $\delight = U \cdot \surp$ satisfies $\delight(y^*) > 0 > \delight(a \neq y^*)$ for all $p, K$.
The additive family $f_\alpha = \alpha U + (1-\alpha)\surp$ achieves sign-separation only when $\alpha > \alpha^*(p,K) = \frac{L}{1+L}$ where $L = \log\!\big(\frac{p(K-1)}{1-p}\big)$.
Additive mixes require no tuning only when $p \le 1/K$ (policy worse than uniform); the moment the policy improves, tuning is required.
\end{proposition}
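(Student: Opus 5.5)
The plan is to make Assumption~\ref{assump:bandit} explicit and reduce everything to two explicit scalars per arm. I will read it as follows: the baseline is the expected reward $b = p$, so $U(y^*) = 1-p$ and $U(a) = -p$ for $a \neq y^*$. The incorrect mass is spread uniformly, $\pi(a) = (1-p)/(K-1)$, so that $\surp(y^*) = -\log p$ and $\surp(a) = \log\frac{K-1}{1-p}$. If the assumption instead allows non-uniform incorrect probabilities, I would run the same argument on the worst-case incorrect arm (the rarest one, which has the largest surprisal) and recover the stated threshold in the uniform case.

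The delight claim is then immediate. For $p \in (0,1)$ the surprisals are strictly positive, so $\delight(y^*) = (1-p)(-\log p) > 0$ and $\delight(a) = -p\,\surp(a) < 0$. The key point to stress is that the sign of delight is inherited from $U$ alone, because surprisal is nonnegative. No parameter enters, so the statement holds for all $p, K$.

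For the additive family, I interpret ``sign-separation'' as the requirement that some threshold (price) separates the correct arm from every incorrect arm, i.e.\ $f_\alpha(y^*) > f_\alpha(a)$. Subtracting the two scores gives
\[
f_\alpha(y^*) - f_\alpha(a) = \alpha\big[(1-p) + p\big] + (1-\alpha)\big[\surp(y^*) - \surp(a)\big] = \alpha - (1-\alpha)L,
\]
using $\surp(a) - \surp(y^*) = \log\frac{p(K-1)}{1-p} = L$. This difference is positive iff $\alpha(1+L) > L$. When $L > 0$ this is exactly $\alpha > L/(1+L) = \alpha^*(p,K)$. When $L \le 0$, which is equivalent to $p \le 1/K$, the difference is at least $\alpha > 0$ for every $\alpha \in (0,1]$, so no tuning is needed. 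Since $L$ increases in $p$ and crosses zero exactly at $p = 1/K$, the final sentence of the proposition follows: as soon as $p > 1/K$, we get $\alpha^* > 0$, so for instance surprisal-only ranking ($\alpha = 0$) mis-ranks. Moreover $\alpha^* \to 1$ as $p \to 1$, so the required $\alpha$ is regime-dependent.

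The main obstacle is definitional rather than computational. I must pin down what ``sign-separation'' means for $f_\alpha$ so that it matches the stated threshold. The literal reading $f_\alpha(y^*) > 0 > f_\alpha(a)$ would give a different and messier condition, because $f_\alpha(a)$ contains the term $(1-\alpha)\surp(a) > 0$. I will therefore state explicitly that separation means existence of a threshold ordering correct above incorrect, equivalently sign-separation after shifting by a price $\lambda$. This is the form in which it matters for gating with $\lambda > 0$, and it is the reading that reproduces $\alpha^* = L/(1+L)$ exactly. I will also check the edge case $1 + L \le 0$, where the inequality is automatic.
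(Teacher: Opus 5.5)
Your proposal is correct and matches the paper's proof essentially step for step. First, $\operatorname{sgn}(\chi) = \operatorname{sgn}(U)$ because $\ell > 0$. Then, with $b = p$ and uniform incorrect mass, the difference is $f_\alpha(y^*) - f_\alpha(a) = \alpha - (1-\alpha)L$, and reading this as an ordering is exactly the interpretation of ``sign-separation'' the paper itself uses.

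Your attention to edge cases is a small improvement over the paper's ``positive iff $\alpha > L/(1+L)$''. That equivalence fails when $1 + L < 0$: the inequality flips and separation holds for every $\alpha \in [0,1]$. It also fails at $p = 1/K$, $\alpha = 0$, where the scores tie. Note too that the threshold does not depend on $b$ at all, since $U(y^*) - U(a) = 1$ for any baseline.
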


Multiplying by a positive number preserves sign; adding a positive number can flip it.
The product targets the intersection of value and information; the sum targets the union.
Figure~\ref{fig:mnist_priority} checks this prediction on MNIST.
Delight remains robust across backward batch sizes and across the full sweep of additive-mix coefficients, whereas surprisal-only and additive priorities degrade or collapse.

\begin{figure}[ht!]
\centering
\begin{subfigure}[t]{0.48\columnwidth}
    \centering
    \includegraphics[width=\linewidth]{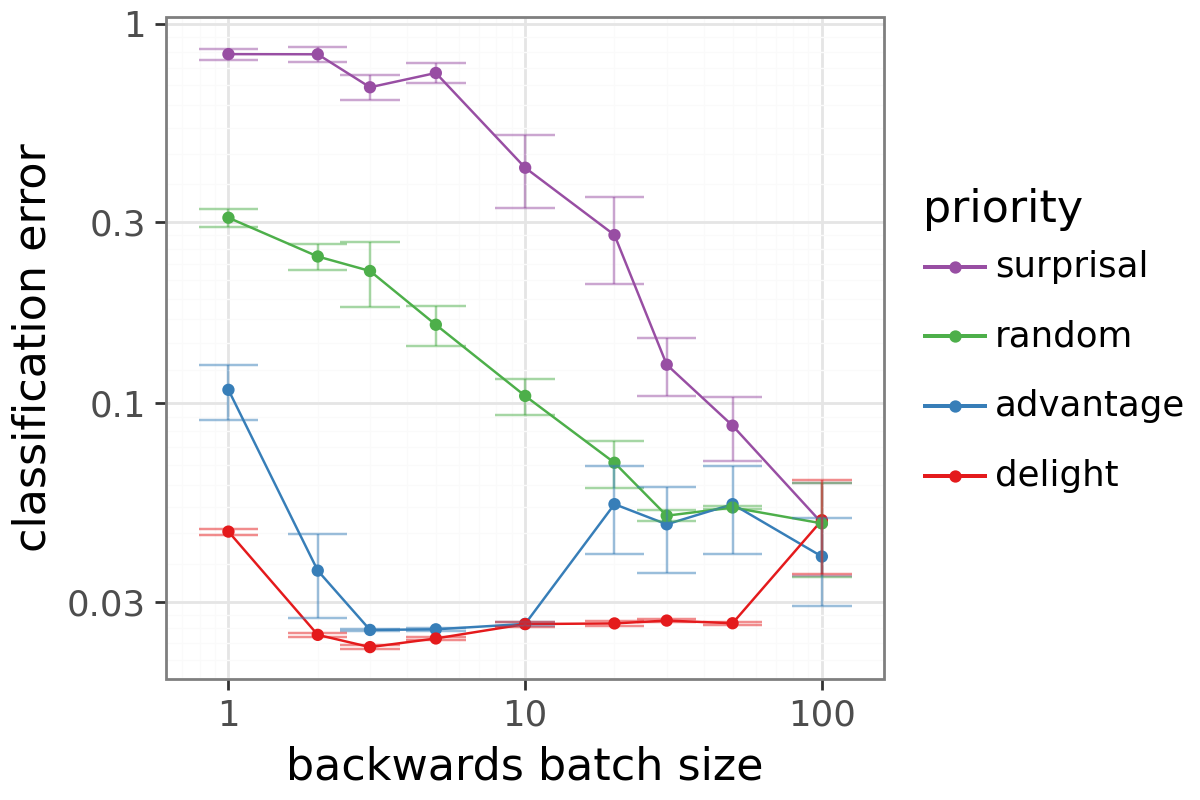}
    \caption{Error vs.\ backward batch size by priority.}
    \label{fig:priority_batch}
\end{subfigure}
\hfill
\begin{subfigure}[t]{0.48\columnwidth}
    \centering
    \includegraphics[width=\linewidth]{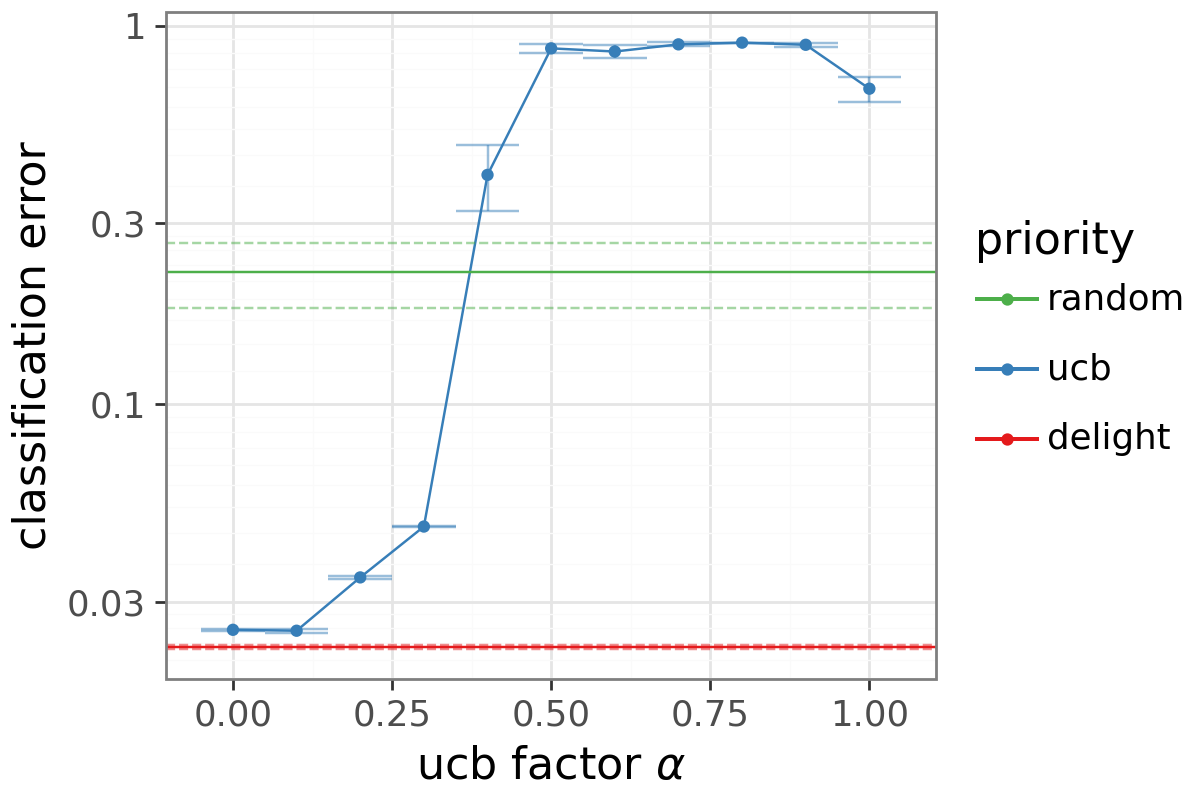}
    \caption{Error vs.\ UCB factor $\alpha$; delight is flat.}
    \label{fig:priority_ucb}
\end{subfigure}
\caption{Priority signal comparison on MNIST.
(a)~Delight is robust across backward batch sizes; surprisal-only fails.
(b)~The additive mix collapses for $\alpha > 0.3$; delight (product) is $\alpha$-independent.
Validates Proposition~\ref{prop:delight_dominance}.}
\label{fig:mnist_priority}
\end{figure}

\subsection{The Gambling Pathology}
\label{sec:gambling}

The final question is when delight-based screening should fail.
The failure mode is not arbitrary noise, but a specific gambling regime in which a rare suboptimal action has such high reward variance that lucky draws masquerade as breakthroughs.

Consider a slot machine: arm~1 pays $\$1$ always; arm~2 pays $\$0$ with probability $0.99$ and $\$50$ with probability $0.01$.
Gap $\Delta = 0.50$, noise $\sigma \approx 5$, ratio $\sigma/\Delta = 10$.
When the slot machine hits, the learner observes $U > 0$: the gate opens.
Because the policy rarely pulls arm~2, surprisal is high: the gate opens \emph{wide}.

\begin{proposition}[Gambling pathology]
\label{prop:gambling}
Two-armed bandit: arm~1 (optimal) with deterministic reward $\mu^*$, arm~2 (suboptimal) with Gaussian reward $R_2 \sim \mathcal{N}(\mu^* - \Delta, \sigma^2)$.
\begin{enumerate}
\item When $\sigma/\Delta \ll 1$: $\Pr(U_2 > 0 \mid A{=}2) \le \exp(-\Omega(\Delta^2/\sigma^2))$.
\item When $\sigma/\Delta \gg 1$: $\Pr(U_2 > 0 \mid A{=}2) = \Theta(1)$.
\item Delight amplifies: $|\delight_2| = |U_2| \cdot \log(1/\varepsilon)$ grows as the policy avoids arm~2.
\end{enumerate}
\end{proposition}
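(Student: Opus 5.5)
We first fix the advantage that the statement leaves implicit, since everything reduces to Gaussian tail computations once $U_2$ is pinned down. With policy $\pi = (1-\varepsilon, \varepsilon)$, where $\varepsilon$ is the probability of arm~2, we take the advantage of a pull of arm~2 to be $U_2 = R_2 - b$ for a baseline $b$. We consider two natural baselines. The first is the optimal value $\mu^*$, which is what a learned critic converges to once the policy is near-greedy. The second is the exact policy value $V^\pi = \mu^* - \varepsilon\Delta$. Under either baseline, $U_2 \mid A{=}2$ is Gaussian with mean $-\Delta'$ and variance $\sigma^2$. Here $\Delta' = \Delta$ for the first baseline and $\Delta' = (1-\varepsilon)\Delta$ for the second. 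In particular $\Delta' \in [\Delta/2, \Delta]$ whenever $\varepsilon \le 1/2$. This sandwich is the only place the choice of baseline matters, and it lets us state all bounds in terms of $\Delta$ up to constants.

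For item~1 we write $\Pr(U_2 > 0 \mid A{=}2) = \Pr(Z > \Delta'/\sigma)$ with $Z \sim \mathcal{N}(0,1)$. The Chernoff/Gaussian tail bound $\Pr(Z > t) \le e^{-t^2/2}$ then gives $\exp(-\Delta'^2/(2\sigma^2)) \le \exp(-\Delta^2/(8\sigma^2))$. This is $\exp(-\Omega(\Delta^2/\sigma^2))$, as claimed.

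For item~2 we use the same expression, $\Pr(Z > \Delta'/\sigma) = 1 - \Phi(\Delta'/\sigma)$. When $\sigma/\Delta \gg 1$ the threshold $t = \Delta'/\sigma$ lies in $[0, \Delta/\sigma]$, which is a small interval near zero. By monotonicity of $\Phi$ we get $1/2 \ge 1 - \Phi(t) \ge 1 - \Phi(\Delta/\sigma)$. Whenever $\Delta/\sigma \le c$, the lower bound is at least $1 - \Phi(c)$, a positive constant, for instance $\ge 0.15$ for $c = 1$. This gives $\Theta(1)$ with explicit constants. We would also note the consequence: a constant fraction of arm-2 pulls are admitted by the zero-price gate, and their positive part has conditional mean $\Theta(\sigma)$.

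For item~3 the surprisal of action~2 is exactly $\surp_2 = -\log \pi(2) = \log(1/\varepsilon)$. Since $\surp_2 > 0$, we have $|\delight_2| = |U_2|\,\surp_2 = |U_2| \log(1/\varepsilon)$. The conditional law of $U_2$ does not depend on $\varepsilon$ under the first baseline, and depends only mildly on it under the second, through $\Delta' \in [\Delta/2,\Delta]$. So $\E|\delight_2|$, and any quantile of $|\delight_2|$ on the event $\{U_2 > 0\}$, scale as $\Theta(\sigma)\log(1/\varepsilon)$ in regime~2, which diverges as $\varepsilon \to 0$. We read ``grows'' as monotone in $\varepsilon$ for fixed $U_2$, which is immediate.

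The main obstacle is interpretive rather than technical: pinning down the advantage so that $U_2$ has a clean law, and justifying that the dependence of the baseline on $\varepsilon$ only perturbs $\Delta$ by a constant factor. We resolve this with the $[\Delta/2, \Delta]$ sandwich above. Once that is in place, each item is a one-line Gaussian tail or log identity.
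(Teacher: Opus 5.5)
Your proposal is correct and is essentially the paper's proof: the paper fixes the baseline $b = V^\pi = \mu^* - \varepsilon\Delta$, bounds $\Pr(R_2 > b)$ by the Gaussian tail $\exp(-(1-\varepsilon)^2\Delta^2/(2\sigma^2))$ for item~1, writes the probability as $1 - \Phi((1-\varepsilon)\Delta/\sigma)$ with an $o(1)$ argument for item~2, and reads item~3 off $\surp_2 = \log(1/\varepsilon)$. Your extra $\mu^*$ baseline and your $[\Delta/2,\Delta]$ sandwich for $\varepsilon \le 1/2$ only make explicit the constants that the paper leaves inside its $(1-\varepsilon)^2$ factor.
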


No per-sample statistic computed from $(R, \pi)$ can distinguish a genuine breakthrough from a lucky draw.
The pathology requires \emph{differential} $\sigma_a/\Delta_a$: under homoskedastic noise, no single arm is disproportionately amplified.
The same joint signal that makes delight valuable in normal learning becomes pathological in this regime: a rare lucky draw on a suboptimal arm looks exactly like a breakthrough.

We validate this on the MNIST bandit from Section~\ref{sec:mnist}.
To inject differential noise, we designate action $a{=}0$ as the ``gamble'': whenever the agent predicts~$0$ (regardless of true label), its reward receives additive $\mathcal{N}(0, \sigma_G^2)$ noise.
Figure~\ref{fig:bandit} sweeps two noise regimes.
Under homoskedastic noise $\sigma_R$, DG and PG degrade smoothly together~(a).
Under gambling noise $\sigma_G$ on a single action, DG dominates for $\sigma_G < 1$ but collapses sharply near $\sigma_G \approx 1$ while PG degrades gracefully~(b): exactly the $\sigma/\Delta \approx 1$ threshold of Proposition~\ref{prop:gambling}.

\begin{figure}[ht!]
\centering
\begin{subfigure}[t]{0.48\columnwidth}
    \centering
    \includegraphics[width=\linewidth]{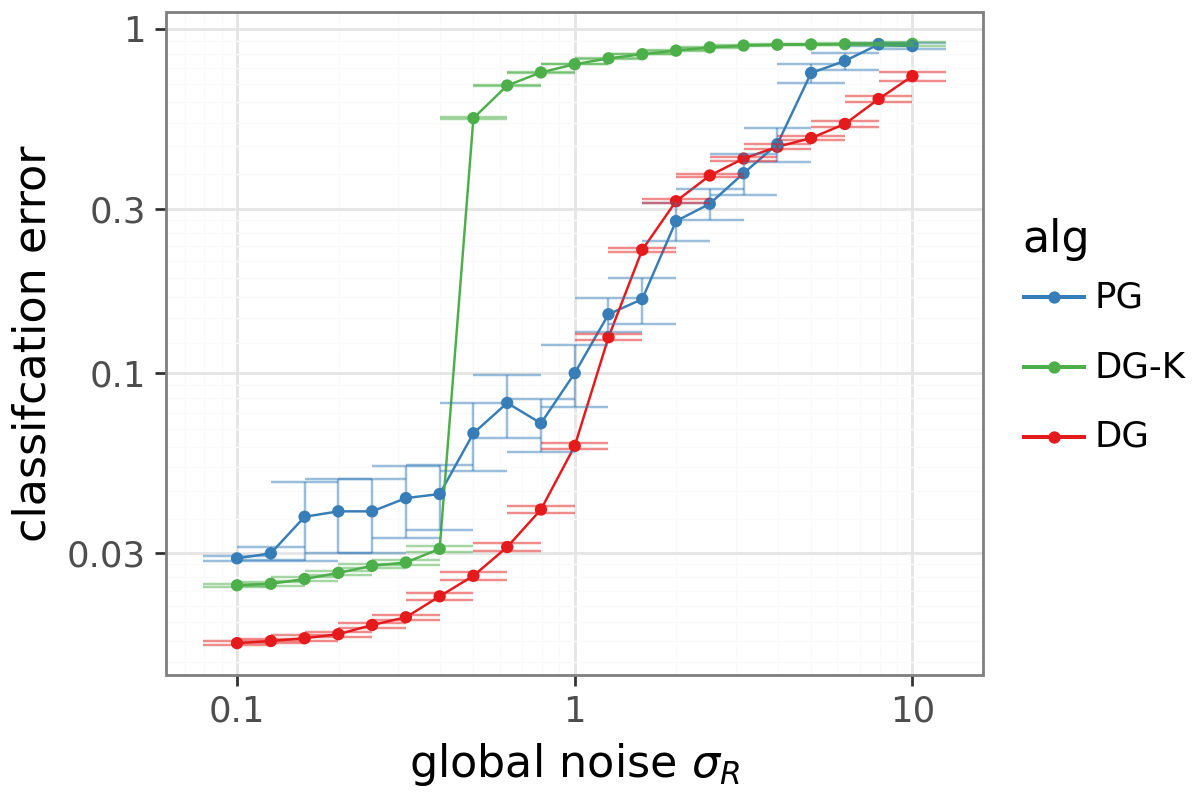}
    \caption{Homoskedastic $\sigma_R$: smooth joint degradation.}
    \label{fig:bandit_homo}
\end{subfigure}
\hfill
\begin{subfigure}[t]{0.48\columnwidth}
    \centering
    \includegraphics[width=\linewidth]{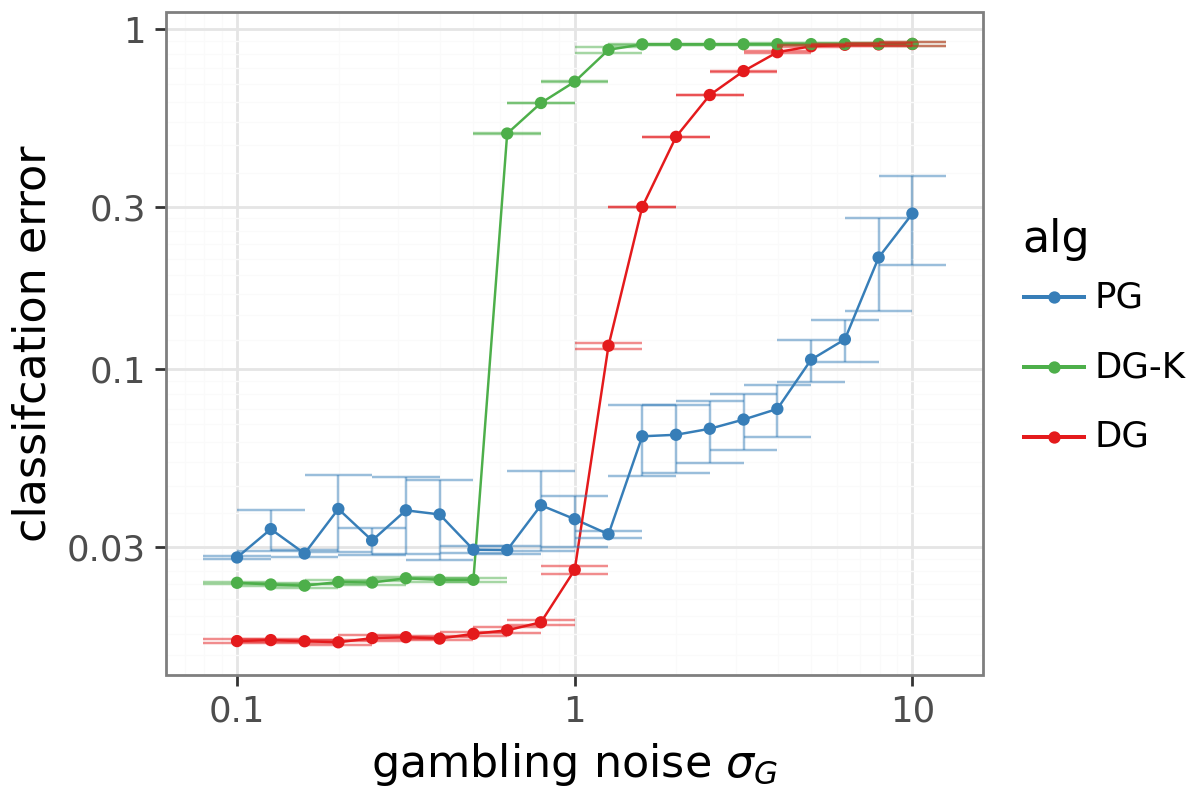}
    \caption{Gambling $\sigma_G$: sharp DG collapse.}
    \label{fig:bandit_hetero}
\end{subfigure}
\caption{Gambling pathology on MNIST (Section~\ref{sec:mnist}).
(a)~Under homoskedastic noise, DG and PG degrade smoothly together.
(b)~Under differential noise on action $a{=}0$, DG collapses sharply near $\sigma_G \approx 1$ while PG degrades gracefully, matching the $\sigma/\Delta$ threshold of Proposition~\ref{prop:gambling}.}
\label{fig:bandit}
\end{figure}

The bandit analysis answers the three questions raised by MNIST.
The gate preserves useful gradient direction while removing perpendicular noise; delight is a more reliable screening signal than additive alternatives; and delight fails in a specific high-variance gambling regime.
The remaining question is whether these mechanisms survive function approximation and sequential credit assignment, where backward passes are more expensive and useful events are rarer.
We now turn to token reversal to test that scaling regime directly.

\section{Token Reversal}
\label{sec:results}

The bandit analysis showed that the Kondo gate can preserve useful gradient signal while removing much of the backward computation.
We now test whether the same mechanisms survive in sequence-model training, where compute efficiency matters most.
Token reversal~\citep{osband2025delightful} asks a transformer to reverse a length-$H$ sequence drawn from a vocabulary of size $M$.
This is the same broad computational pattern as reasoning-style language-model training: the model must process an input, preserve its structure in memory, and generate a coherent output autoregressively.
As either $H$ or $M$ grows, the task becomes harder and informative events become rarer, so selective backward computation should become increasingly valuable.
Each gradient step processes a batch of $100$ episodes ($10$ prompts ${\times}$ $10$ responses); full experimental details are in Appendix~\ref{app:token_reversal}.

\begin{figure}[ht!]
\centering
\includegraphics[width=0.6\columnwidth]{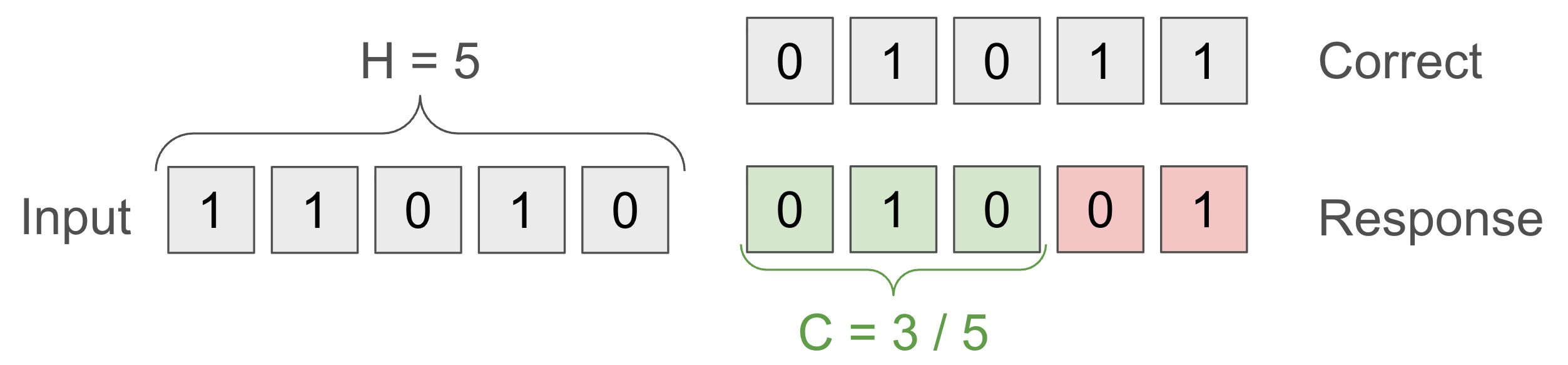}
\vspace{-2mm}
\caption{Token reversal ($M{=}2$, $H{=}5$): the agent must output the input in reverse.}
\label{fig:binary_reverse_strict}
\end{figure}

We compare PG (REINFORCE), PPO, PMPO, DG (full delightful gradient), and two Kondo variants.
DG-K ($\rho = 3\%$) imposes a fixed backward-compute budget, keeping the top $3\%$ of tokens by delight and skipping the rest.
DG-K ($\lambda = 0$) gates on the sign of delight and adapts automatically as the policy improves.
The central question is whether DG-K preserves DG's learning quality while collapsing the backward-pass cost.

Figure~\ref{fig:token_regret} shows the representative result.
In forward-pass space~(a), DG and both DG-K variants dominate PG, PPO, and PMPO by over an order of magnitude.
In backward-pass space~(b), the same DG-K curves collapse into the leftmost sliver of the plot: essentially the same quality as DG using orders of magnitude fewer backward passes.
This is the basic phenomenon of the paper in a sequential transformer setting: preserve the learning curve, remove most of the backpropagation.

\begin{figure}[ht!]
\centering
\begin{subfigure}[t]{0.48\columnwidth}
    \centering
    \includegraphics[width=\linewidth]{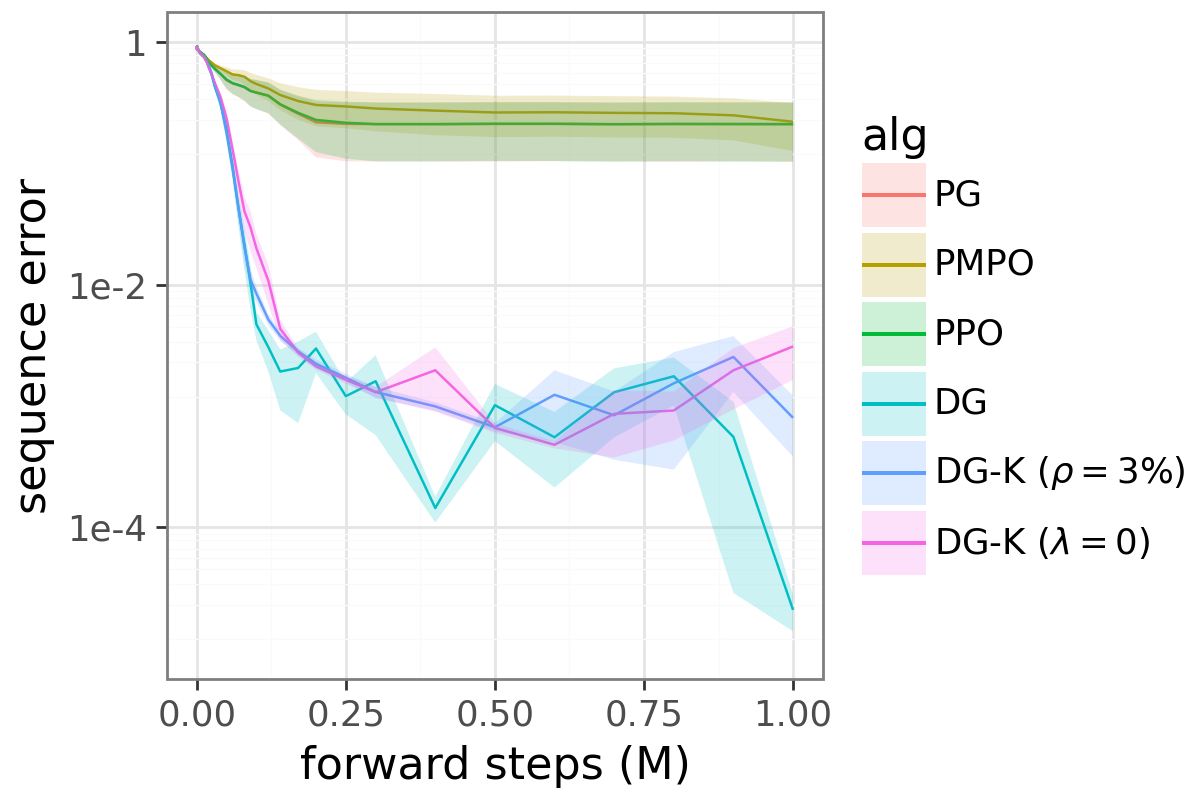}
    \vspace{-2mm}
    \caption{Forward passes: DG-K $\approx$ DG $\gg$ PG.}
    \label{fig:regret_forward}
\end{subfigure}
\hfill
\begin{subfigure}[t]{0.48\columnwidth}
    \centering
    \includegraphics[width=\linewidth]{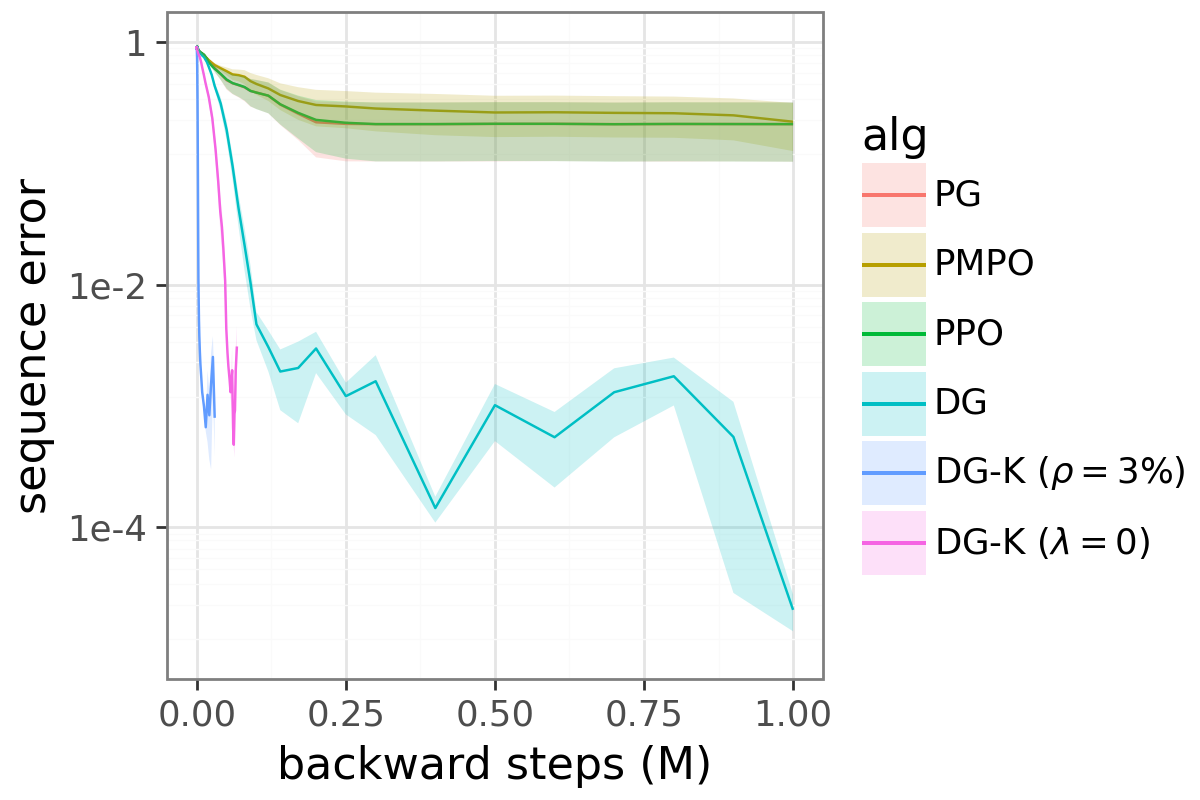}
    \vspace{-2mm}
    \caption{Backward passes: DG-K $\gg$ alternatives.}
    \label{fig:regret_back}
\end{subfigure}
\caption{Token reversal learning curves ($H=10, M=2$).
10 seeds; shading $\pm 1$ s.e.}
\label{fig:token_regret}
\end{figure}

Figure~\ref{fig:scaling_vocab} sweeps vocabulary size $M$, exposing the tradeoff between the two gates.
In forward-pass space, the adaptive gate ($\lambda = 0$) tracks full DG and remains robust as $M$ grows, whereas the fixed gate ($\rho = 3\%$) becomes too aggressive at large vocabularies.
In backward-pass space, however, both Kondo variants still massively outperform baselines.
The adaptive gate is the safer default when difficulty is unknown; the fixed gate remains attractive when backward savings are paramount.

\begin{figure}[ht!]
\centering
\begin{subfigure}[t]{0.48\columnwidth}
    \centering
    \includegraphics[width=\linewidth]{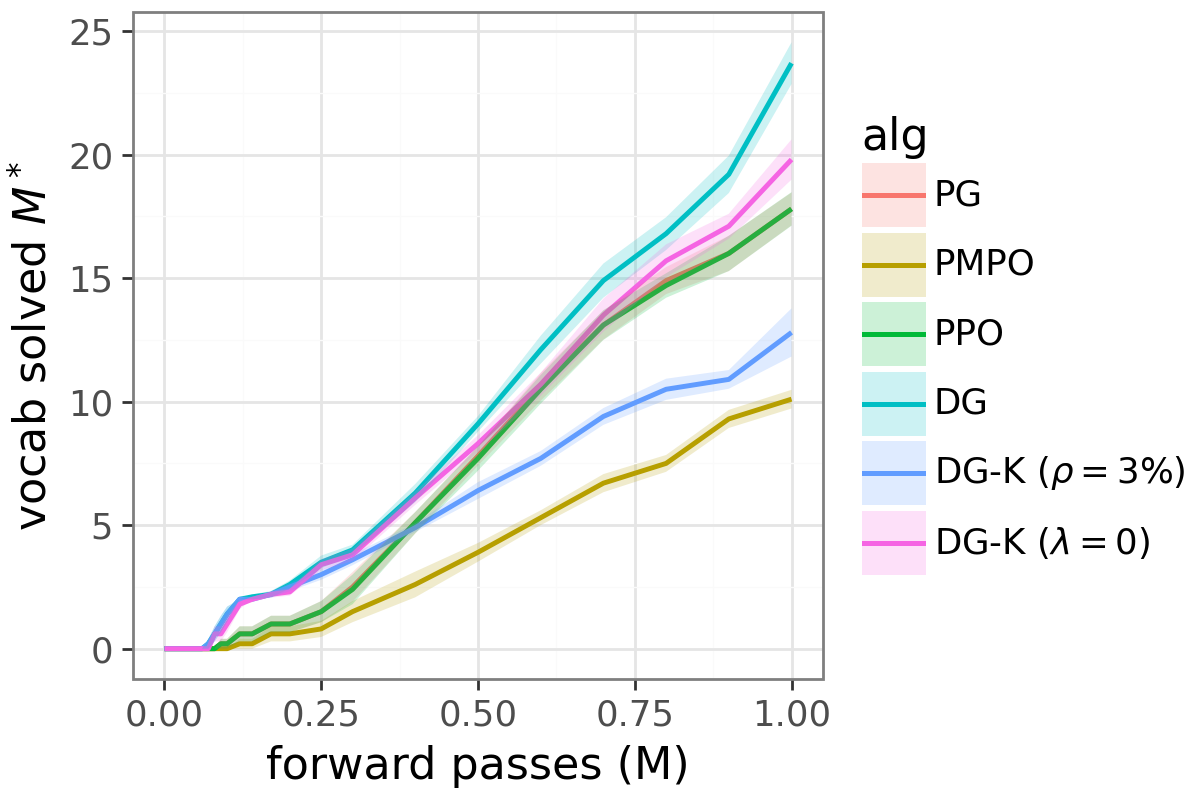}
    \vspace{-2mm}
    \caption{Vocab solved $M^*$ (forward passes).}
    \label{fig:vocab_scale_forward}
\end{subfigure}
\hfill
\begin{subfigure}[t]{0.48\columnwidth}
    \centering
    \includegraphics[width=\linewidth]{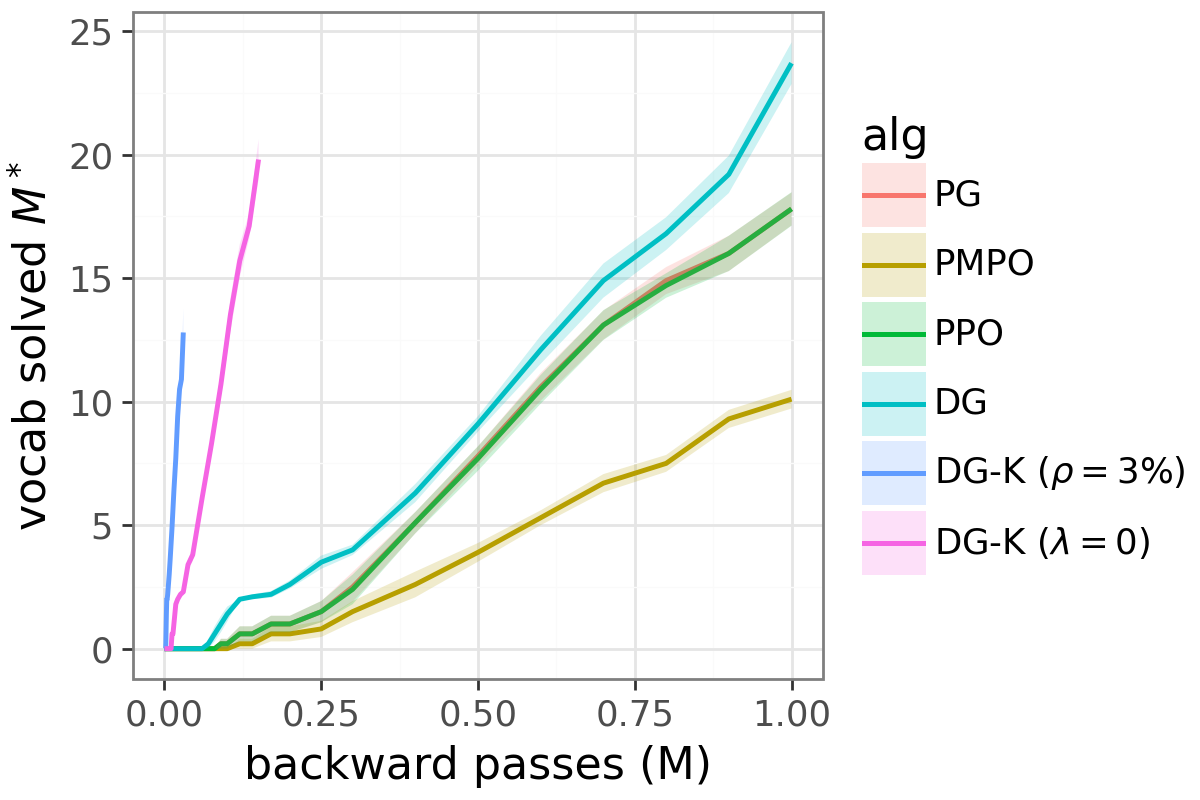}
    \vspace{-2mm}
    \caption{Vocab solved $M^*$ (backward passes).}
    \label{fig:vocab_scale_backward}
\end{subfigure}
\caption{Scaling with vocabulary size: $M^*$ = largest vocabulary solved vs.\ compute.
Fixed $\rho = 3\%$ is too aggressive at large $M$; adaptive $\lambda = 0$ is robust and preserves backward-compute gains.}
\label{fig:scaling_vocab}
\end{figure}

Figure~\ref{fig:scaling_length} shows the main scaling result.
As sequence length grows, the forward-pass picture is already striking: DG-K ($\rho = 3\%$) solves the longest sequences, slightly exceeding even full DG, while PG, PPO, and PMPO scale sublinearly and remain far behind.
In backward-pass space the advantage becomes dramatic: DG-K ($\rho = 3\%$) solves $H^* \approx 29$ using a sliver of the backward compute that DG needs to reach $H^* \approx 27$.
This is the strongest regime for the Kondo gate: the fixed-budget variant wins on both axes, best learning quality at lowest backward cost.

\begin{figure}[ht!]
\centering
\begin{subfigure}[t]{0.48\columnwidth}
    \centering
    \includegraphics[width=\linewidth]{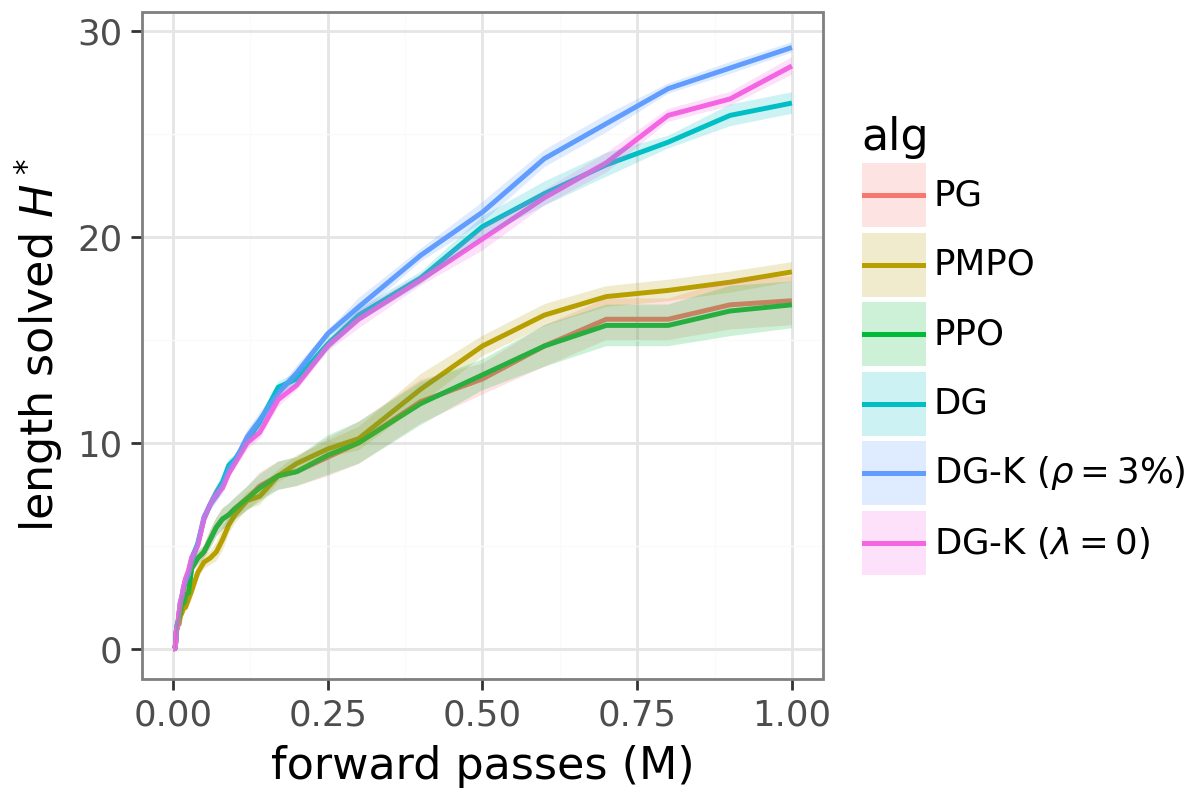}
    \vspace{-2mm}
    \caption{Length solved $H^*$ (forward passes).}
    \label{fig:length_scale_forward}
\end{subfigure}
\hfill
\begin{subfigure}[t]{0.48\columnwidth}
    \centering
    \includegraphics[width=\linewidth]{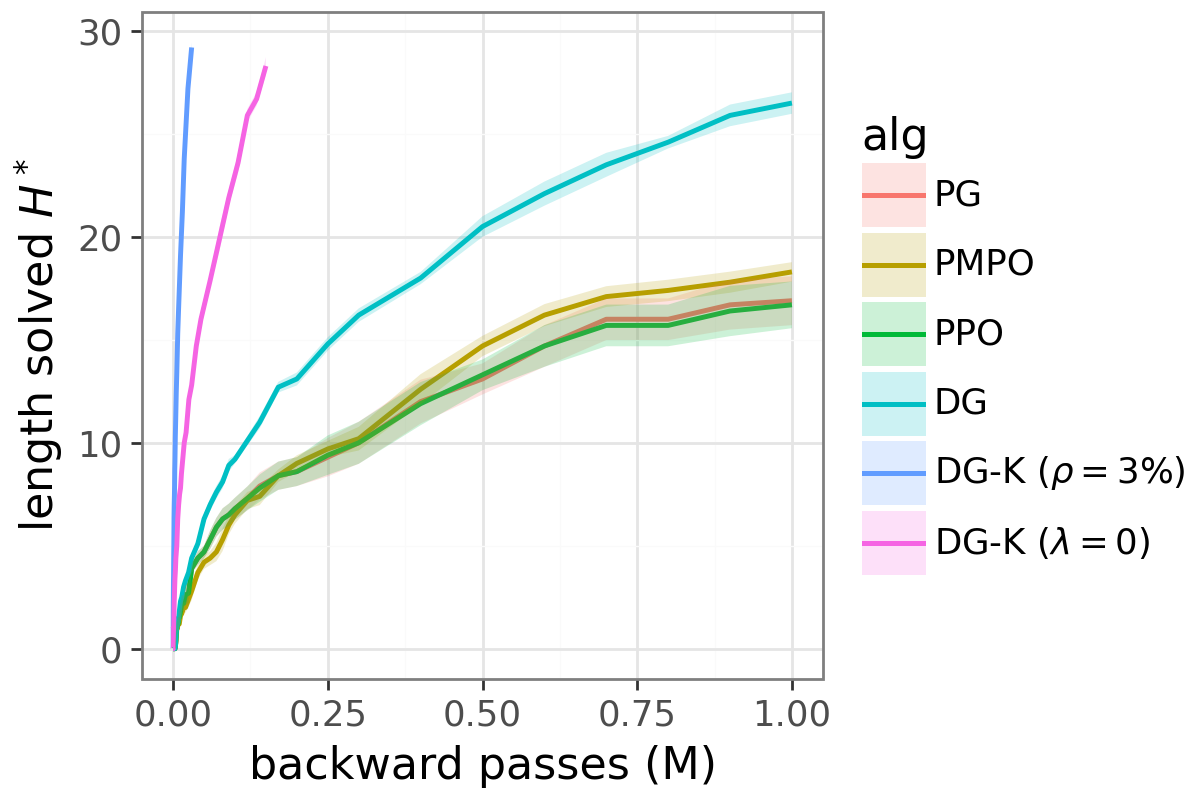}
    \vspace{-2mm}
    \caption{Length solved $H^*$ (backward passes).}
    \label{fig:length_scale_backward}
\end{subfigure}
\caption{Scaling with sequence length: $H^*$ = longest sequence solved (reward $> 0.75$) vs.\ compute.
DG-K matches or exceeds DG and solves longer sequences at far lower backward cost.}
\label{fig:scaling_length}
\vspace{-2mm}
\end{figure}

Across both scaling axes, the pattern is clear: the Kondo gate preserves DG's gains in forward-pass space and turns them into large wins in backward-pass space.
As sequence length and vocabulary size grow, the gate solves harder problems at equal backward compute, confirming the bandit prediction that screening becomes more valuable when useful learning events are rare.
In practical regimes where backward passes are at least as expensive as forward passes, these backward savings translate directly into lower total training cost.
The adaptive gate ($\lambda = 0$) is reliable across problem difficulties; the fixed gate ($\rho = 3\%$) delivers the largest savings when tuned per task.

\section{Related Work}
\label{sec:related}

We situate the Kondo gate among existing approaches to compute-efficient training, priority sampling, and robust policy gradients.

\paragraph{Selective backpropagation and curriculum learning.}
Selective backpropagation~\citep{katharopoulos2018not} skips samples with high loss, but loss is agnostic to gradient direction: high loss need not imply high learning value.
Curriculum learning~\citep{bengio2009curriculum,graves2017automated} similarly prioritizes training examples by difficulty, but selects which data to present rather than which gradients to compute.
Delight instead combines value and surprise, and Proposition~\ref{prop:delight_dominance} shows that its sign remains aligned with usefulness across regimes in which additive alternatives can mis-rank samples.

\paragraph{Prioritized experience replay.}
PER~\citep{schaul2016per} and its distributed extension Ape-X~\citep{horgan2018distributed} prioritize replay transitions by TD error, which conflates epistemic and aleatoric uncertainty.
The Kondo gate differs in three ways: it prioritizes \emph{gradient computation}, not replay; the priority signal is available from a single forward pass rather than requiring a replay buffer; and the compute savings are literal (skipped backward passes) rather than indirect.

\paragraph{Speculative decoding.}
Speculative decoding~\citep{leviathan2023fast} skips expensive inference steps using a cheaper draft model.
The Kondo gate is the training counterpart: skip expensive backward passes using a cheap forward-pass signal.
Section~\ref{sec:compute_efficiency} shows the gate tolerates approximate delight, validating this paradigm for training as well as inference.

\paragraph{UCB, active learning, and optimistic exploration.}
UCB-style methods~\citep{auer2002nonstochastic} encourage exploration through an additive bonus.
Delight's surprisal term is instead multiplicative: it prioritizes events that are both valuable and unexpected, rather than either one alone.
Proposition~\ref{prop:delight_dominance} formalizes the difference: additive mixtures require regime-dependent tuning, while the product remains sign-consistent.
The gate can also be viewed as within-batch active learning, but unlike pool-based methods it requires no acquisition model and the selection criterion is a by-product of the forward pass.

\paragraph{GRPO, AWR, PMPO.}
These methods weight by advantage or exponentiated advantage but are surprisal-blind~\citep{shao2024deepseekmath,peng2019advantage,abdolmaleki2024preference,schulman2017proximal,schulman2015trust}.
As $M^H$ grows, gradient budget concentrates on predictable tokens whose advantage happened to be nonzero, starving rare informative events.
The Kondo gate inherits DG's surprisal sensitivity and uses it for a different purpose: not merely to reweight updates, but to decide which backward passes are worth computing at all.
The same principle applies to RLHF pipelines~\citep{ouyang2022training,ziegler2019fine}, where backward passes through large reward and policy models are especially expensive.

Across these lines of work, the common theme is prioritization under limited compute.
What distinguishes the Kondo gate is that the prioritized object is neither replay nor exploration nor gradient weight, but the backward pass itself.

\section{Conclusion}
\label{sec:conclusion}

The forward pass already tells you whether the backward pass is worth computing.
Delight---the product of advantage and surprisal---is the signal, and the price $\lambda$ turns it into a quality--cost Pareto frontier.
Across MNIST, bandits, and transformer token reversal, the Kondo gate skips most backward passes while retaining nearly all of DG's learning quality, yielding large gains in backward-pass space over PG, PPO, and PMPO.

The broader implication is that many gradient computations in sequence-model training are redundant.
When useful learning events are rare and backward passes are expensive, screening before backpropagation becomes increasingly valuable.
The gate's robustness to approximate delight further suggests a speculative-decoding-for-training paradigm: a cheap pass identifies the samples worth paying to learn from.

The main limitation is the gambling regime identified in Proposition~\ref{prop:gambling}, where high reward variance can make a rare lucky draw look like a true breakthrough.
Beyond that, the central open question is scale: do the same gains survive in modern large-model training?
Distilled delight predictors, adaptive gate schedules, and transfer to RLHF are natural next steps.

\bibliography{references}
\bibliographystyle{plainnat}


\newpage
\appendix

\section{MNIST Diagnostic}
\label{app:mnist}

We provide experimental details and supplementary figures for the MNIST experiments in Section~\ref{sec:mnist}.

\subsection{Experimental Details}
\label{app:mnist_details}

\paragraph{Architecture.}
The policy is a two-layer MLP with 100 hidden units per layer and softmax output over 10 actions (digits 0--9).
The environment is an MNIST contextual bandit: the agent observes an image, selects an action, and receives reward $r = \mathbb{I}\{a = y\}$ where $y$ is the true label.

\paragraph{Optimization.}
All methods use Adam with a learning rate swept over $\{10^{-4}, 3 \times 10^{-4}, 10^{-3}, 3 \times 10^{-3}\}$.
Each gradient step uses a batch of $B = 100$ samples drawn with replacement from the training set.
Training runs for $10{,}000$ gradient steps with validation every 100 steps on the full 10{,}000-image test set.
All main-body figures average over 30 seeds; shading shows $\pm 1$ standard error.

\paragraph{Baseline and advantage.}
All methods use an expected-confidence baseline $b = \sum_a \pi(a) \cdot r(a)$, which equals $p = \pi(y^*)$ for deterministic reward.
This gives advantage $U(y^*) = 1 - p > 0$ for the correct action and $U(a \neq y^*) = -p < 0$ for incorrect actions.

\paragraph{Kondo gate.}
The gate rate $\rho$ is implemented by setting $\lambda$ to the $(1{-}\rho)$-quantile of delight within each batch, so that roughly $\rho B$ samples receive a backward pass.
We sweep $\rho \in \{0.01, 0.03, 0.05, 0.1, 0.2, 0.5, 1.0\}$; $\rho = 1$ recovers full DG.
For the priority comparison (Figure~\ref{fig:mnist_priority}), we additionally test advantage-only, surprisal-only, absolute-advantage, uniform (random subsampling), and additive $\alpha U + (1{-}\alpha)\ell$ with $\alpha \in \{0.0, 0.25, 0.5, 0.75, 1.0\}$.

\paragraph{Gambling experiment.}
For the gambling pathology (Figure~\ref{fig:bandit}), action $a = 0$ is designated the gamble: whenever the agent predicts $0$ (regardless of true label), its reward receives additive $\mathcal{N}(0, \sigma_G^2)$ noise.
Homoskedastic noise $\sigma_R$ is applied to all actions.
We sweep $\sigma_G \in \{0, 0.5, 1.0, 1.5, 2.0\}$ and $\sigma_R \in \{0, 0.5, 1.0, 2.0, 5.0\}$.

\subsection{Learning Rate Sensitivity and Test Error}
\label{app:mnist_lr}

Figure~\ref{fig:mnist_lr} sweeps the learning rate for PG, DG, and DG-K ($\rho = 0.03$).
All three methods share the same optimum at $\mathrm{lr} = 10^{-3}$, with DG dominating across the entire range and DG-K close behind.
The pattern is nearly identical for training error~(a) and test error~(b): the Kondo gate is not exploiting a train/test gap.

\begin{figure}[ht!]
\centering
\begin{subfigure}[t]{0.48\columnwidth}
    \centering
    \includegraphics[width=\linewidth]{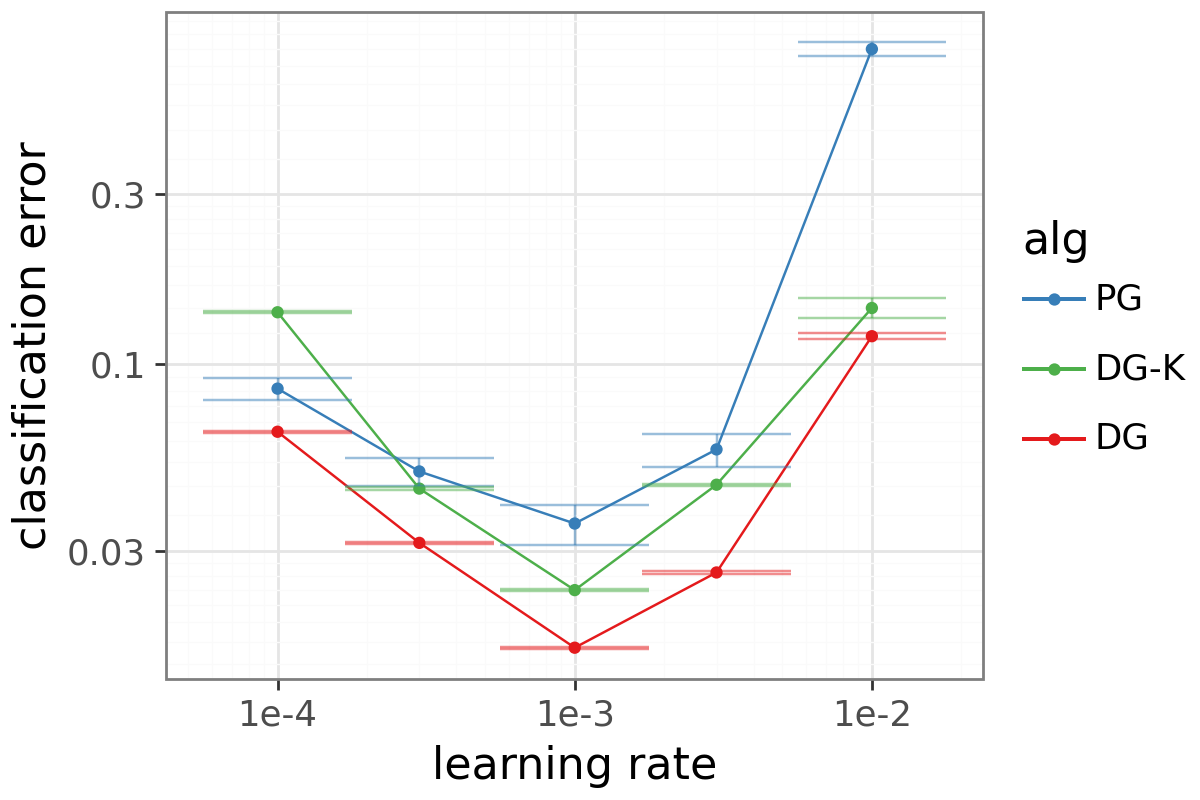}
    \caption{Training error vs.\ learning rate.}
    \label{fig:mnist_lr_train}
\end{subfigure}
\hfill
\begin{subfigure}[t]{0.48\columnwidth}
    \centering
    \includegraphics[width=\linewidth]{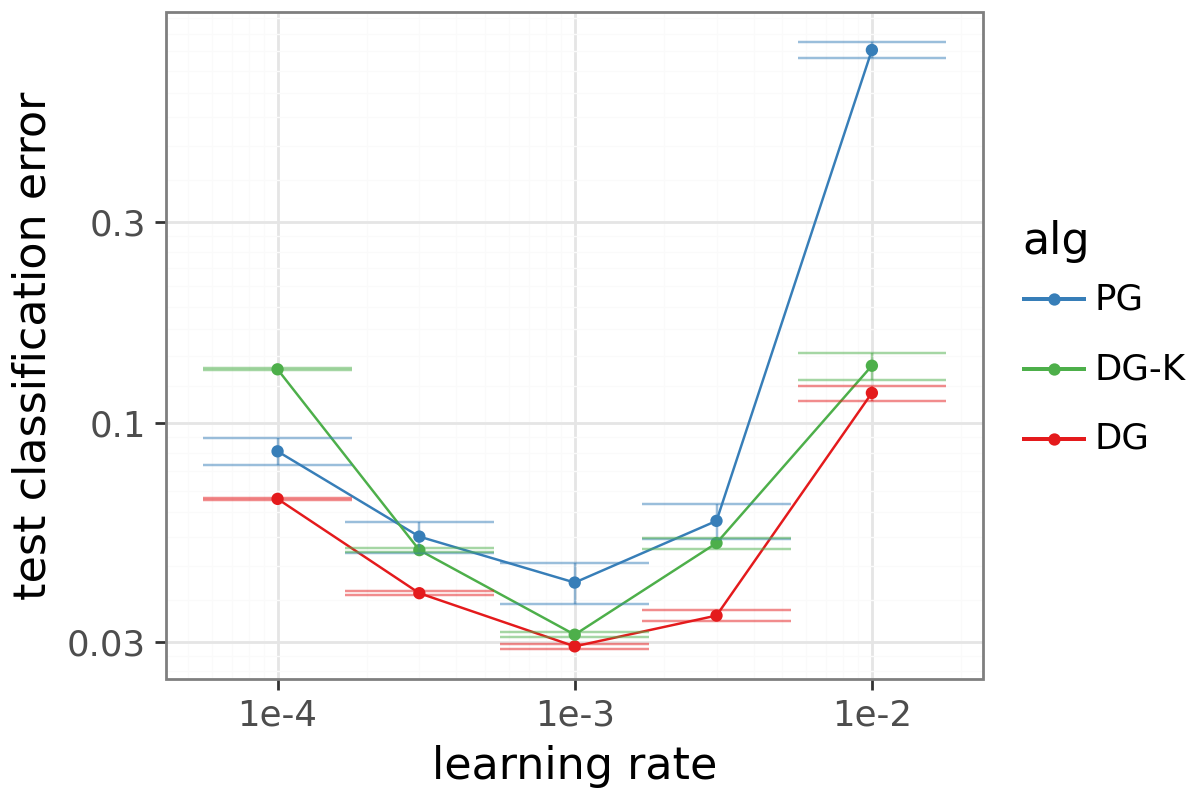}
    \caption{Test error vs.\ learning rate.}
    \label{fig:mnist_lr_test}
\end{subfigure}
\caption{Learning rate sweep on MNIST.
All methods are optimal near $\mathrm{lr} = 10^{-3}$; DG dominates across the range.
Training and test error track closely, confirming no train/test gap.}
\label{fig:mnist_lr}
\end{figure}

Figure~\ref{fig:mnist_test} replicates the main-body comparison (Figure~\ref{fig:mnist}) using test classification error rather than training error.
The same pattern holds: DG-K matches DG in forward-pass space and dominates by two orders of magnitude in backward-pass space.
This confirms that the Kondo gate's advantage is not specific to the RL reward signal; it transfers to the supervised-learning notion of generalization error that is more standard in MNIST benchmarks.

\begin{figure}[ht!]
\centering
\begin{subfigure}[t]{0.48\columnwidth}
    \centering
    \includegraphics[width=\linewidth]{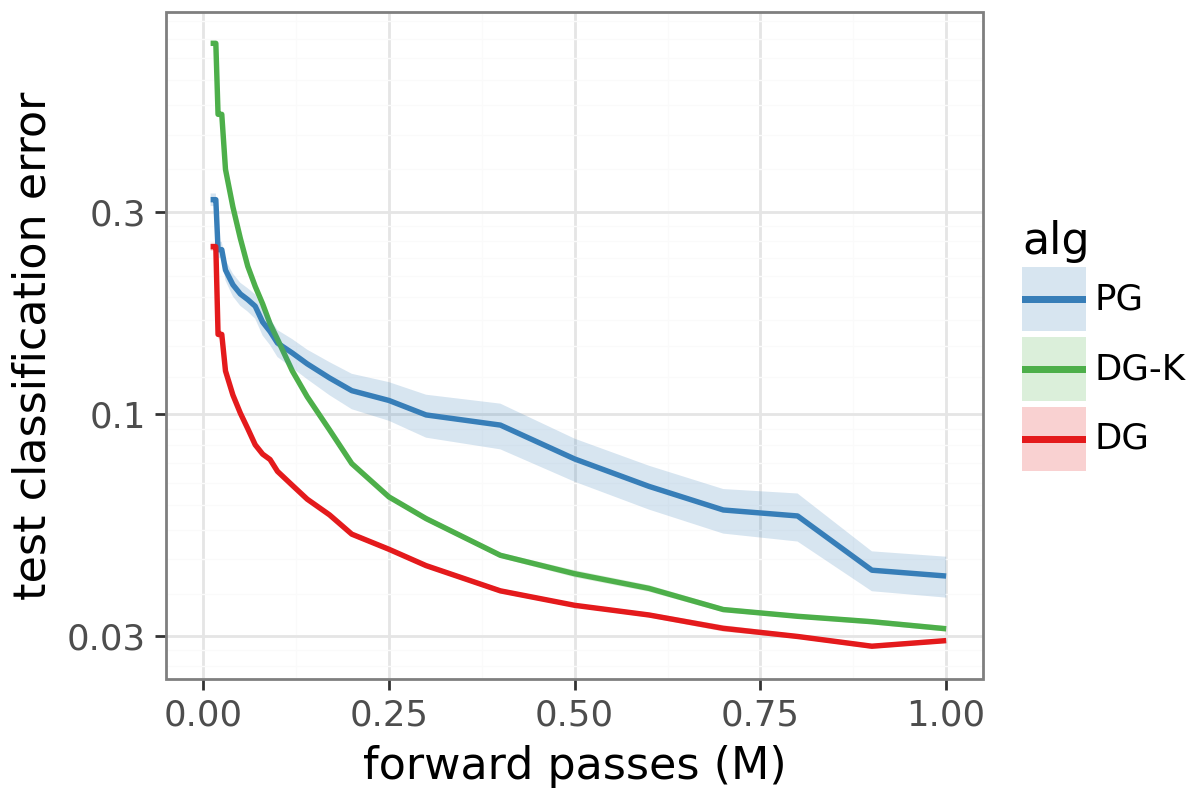}
    \caption{Forward passes: test error.}
    \label{fig:mnist_forward_test}
\end{subfigure}
\hfill
\begin{subfigure}[t]{0.48\columnwidth}
    \centering
    \includegraphics[width=\linewidth]{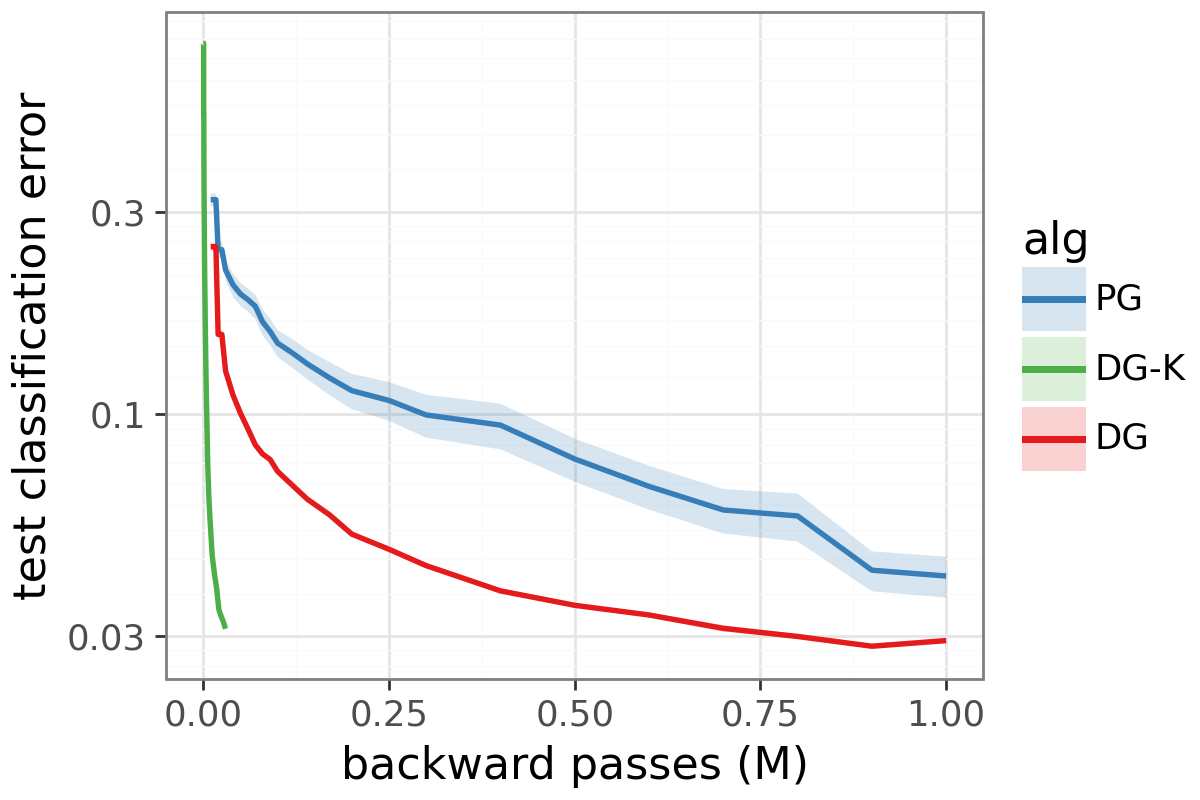}
    \caption{Backward passes: test error.}
    \label{fig:mnist_backward_test}
\end{subfigure}
\caption{Test classification error on MNIST at $\rho = 0.03$.
Same comparison as Figure~\ref{fig:mnist} but with held-out test error.
DG-K matches DG in forward-pass space and dominates in backward-pass space; the gate's advantage generalizes beyond training error.}
\label{fig:mnist_test}
\end{figure}

\subsection{Baseline Robustness}
\label{app:mnist_baselines}

The main-body results use the expected-confidence baseline $b = \sum_a \pi(a) \cdot r(a)$, which the companion paper identifies as the natural choice for MNIST~\citep{osband2025delightful}.
To check that the Kondo gate's advantage is not an artifact of this choice, Figure~\ref{fig:mnist_baselines} repeats the comparison under four baselines: zero ($b = 0$), constant ($b = 0.5$), expected ($b = \hat{\mathbb{E}}[R \mid x]$), and oracle ($b = \mathbb{E}[R \mid x]$ using the true label).

The general pattern is the same across all four baselines: DG dominates PG, DG-K matches DG in forward-pass space, and DG-K dominates in backward-pass space.
Under the zero baseline, DG-K actually outperforms DG on forward passes; we report the expected baseline in the main body to avoid cherry-picking.

\begin{figure}[ht!]
\centering
\includegraphics[width=\columnwidth]{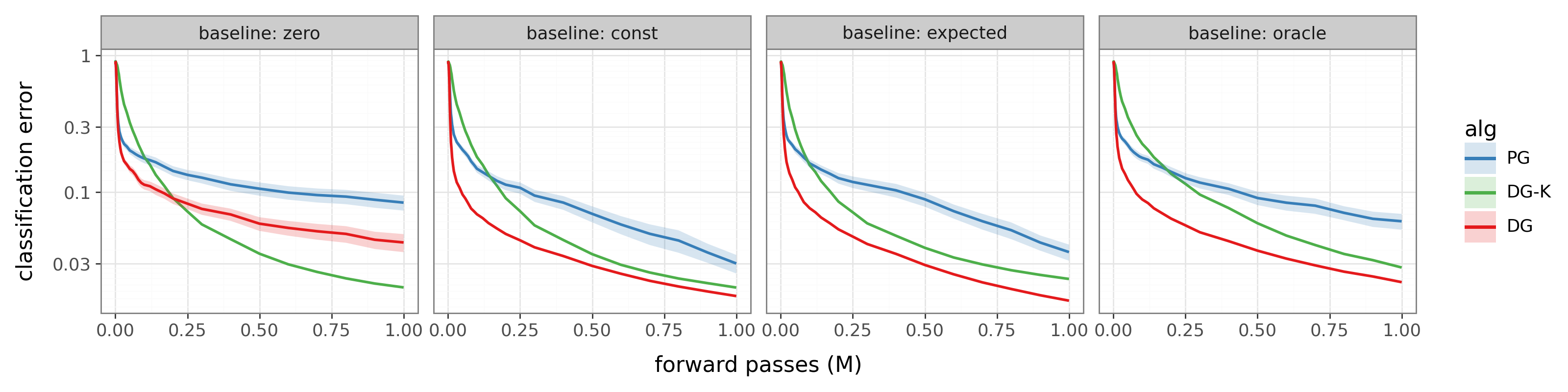}
\caption{Forward-pass comparison across baselines on MNIST at $\rho = 0.03$.
The Kondo gate matches or exceeds DG under all four baselines.
Under the zero baseline, DG-K even outperforms DG in forward-pass space.}
\label{fig:mnist_baselines_forward}
\end{figure}

\begin{figure}[ht!]
\centering
\includegraphics[width=\columnwidth]{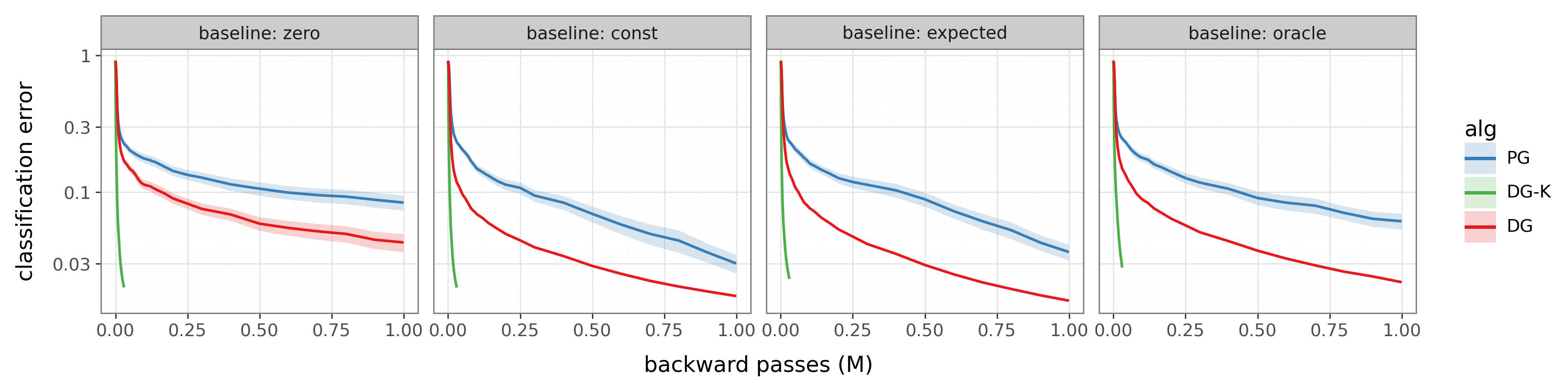}
\caption{Backward-pass comparison across baselines on MNIST at $\rho = 0.03$.
The Kondo gate dominates in backward-pass space under all four baselines: the two-orders-of-magnitude advantage is not baseline-dependent.}
\label{fig:mnist_baselines_backward}
\end{figure}

\label{fig:mnist_baselines}

\subsection{Gate Selection Profile}
\label{app:mnist_histogram}

Figure~\ref{fig:mnist_histogram} shows the empirical CDF of $\pi(y^*)$ for kept vs.\ skipped samples at three stages of training, aggregated over 100 batches (10{,}000 samples per stage).
At step 100, both distributions nearly overlap: the policy is too uncertain currently for delight to discriminate.
By step 1{,}000, clear separation emerges: kept samples (red) have systematically lower $\pi(y^*)$ than skipped samples (blue), demonstrating first-order stochastic dominance.
The gate targets the \emph{learning frontier}---samples the policy predicts correctly but without confidence.
By step 10{,}000, both distributions concentrate near $\pi(y^*) \approx 1$, but the kept distribution remains shifted left, finding the few remaining hard cases.

\begin{figure}[ht!]
\centering
\includegraphics[width=\columnwidth]{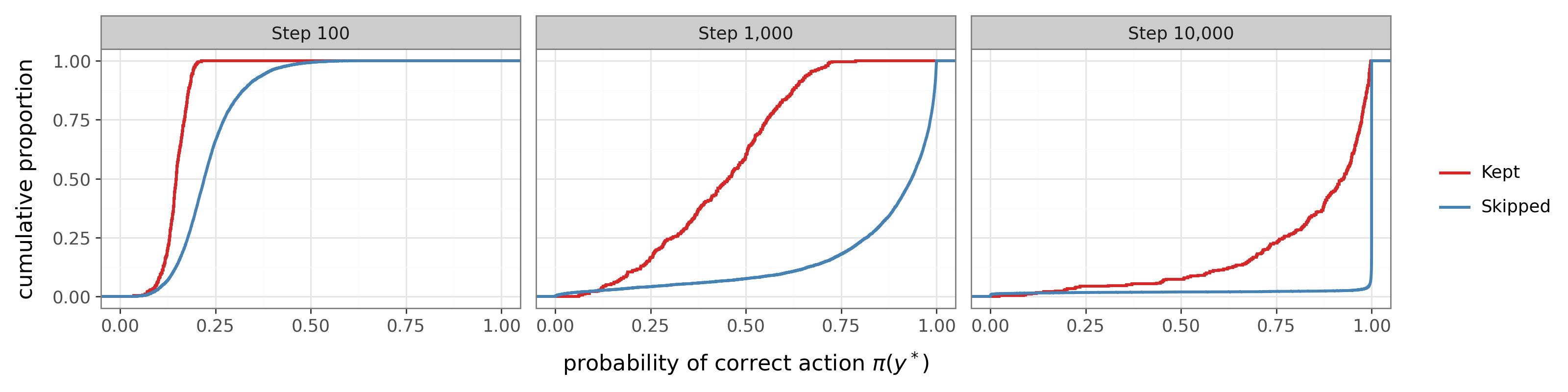}
\caption{Empirical CDF of $\pi(y^*)$ for kept vs.\ skipped samples at three training stages ($\rho = 0.03$, 10{,}000 samples per stage).
Kept samples have systematically lower $\pi(y^*)$: the gate selects the learning frontier where the model is correct but uncertain.}
\label{fig:mnist_histogram}
\end{figure}

\subsection{Kept vs.\ Skipped Images}
\label{app:mnist_kept}

Figure~\ref{fig:mnist_kept} shows images the Kondo gate keeps versus those it skips at $\rho = 0.03$ across three training stages.
Each image is annotated with its true label $y$, the action selected $a$, and the probability of the correct action $p = \pi(y^*)$.

At step 100~(a), the model is too uncertain for the gate to make meaningful distinctions: both rows contain diverse digits with low $p$.
By step 1{,}000~(b), separation emerges: kept images have moderate $p$ (the model is learning but not yet confident), while skipped images are either already solved ($p \approx 1$) or hopelessly misclassified.
At step 10{,}000~(c), the contrast is sharpest: kept images are correctly classified ($a = y$) but with $p < 1$---precisely the learning frontier.
Skipped images include both solved cases ($p = 1.00$) and a few confidently wrong predictions ($a \neq y$, very low $p$), where the negative advantage makes delight strongly negative.

\begin{figure}[ht!]
\centering
\begin{subfigure}[t]{\columnwidth}
    \centering
    \includegraphics[width=\linewidth]{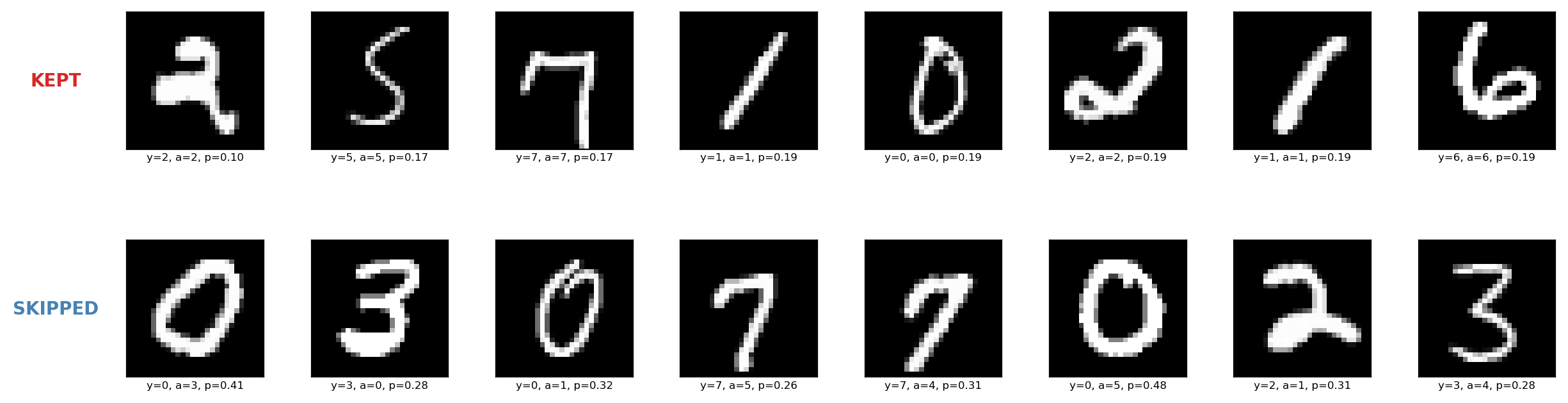}
    \caption{Step 100: the model is too uncertain to meaningfully discriminate.}
    \label{fig:mnist_kept_early}
\end{subfigure}
\\[0.5em]
\begin{subfigure}[t]{\columnwidth}
    \centering
    \includegraphics[width=\linewidth]{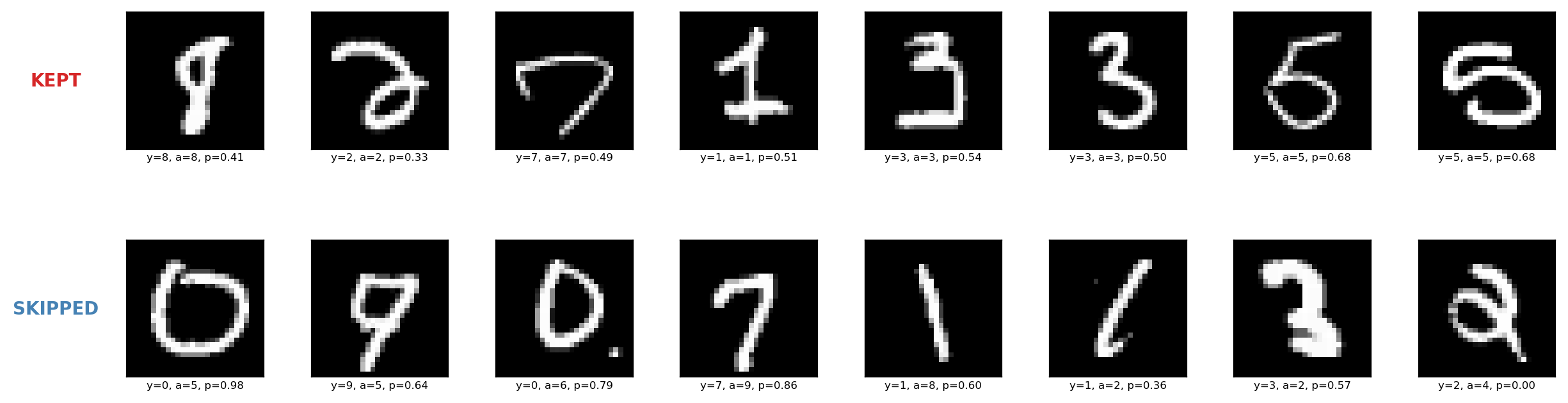}
    \caption{Step 1{,}000: separation emerges; kept images have moderate $\pi(y^*)$.}
    \label{fig:mnist_kept_mid}
\end{subfigure}
\\[0.5em]
\begin{subfigure}[t]{\columnwidth}
    \centering
    \includegraphics[width=\linewidth]{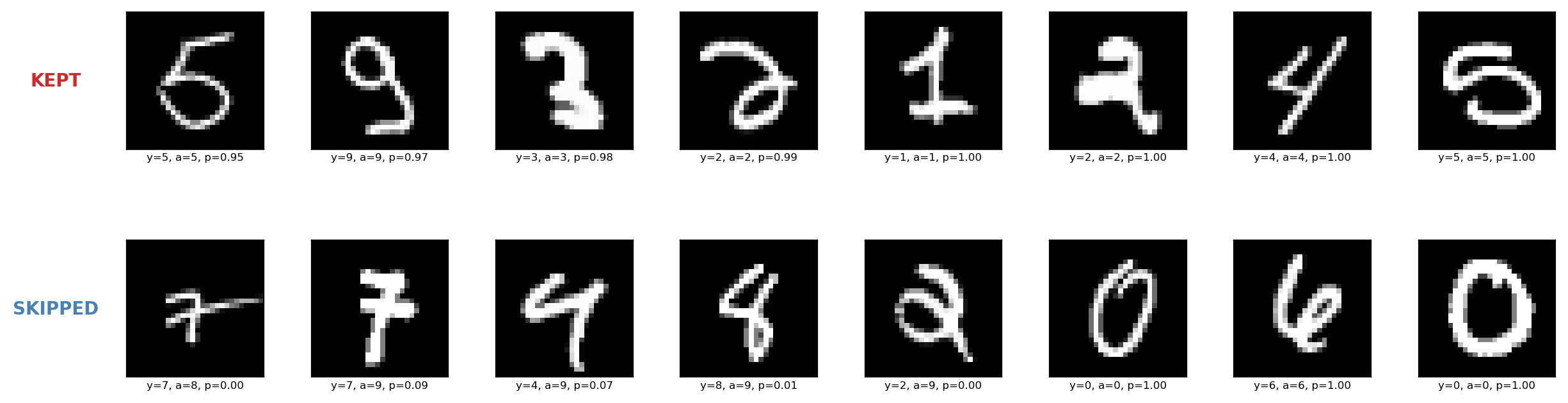}
    \caption{Step 10{,}000: kept images are correct but uncertain ($a = y$, $p < 1$); skipped are solved or confidently wrong.}
    \label{fig:mnist_kept_late}
\end{subfigure}
\caption{Images kept vs.\ skipped by the Kondo gate at $\rho = 0.03$ across three training stages.
Each image shows ground truth $y$, selected action $a$, and $p = \pi(y^*)$.
The gate progressively concentrates compute on the learning frontier.}
\label{fig:mnist_kept}
\end{figure}

\subsection{Delight Noise Robustness}
\label{app:delight_noise_absolute}

Figure~\ref{fig:robust_delight_absolute} complements Figure~\ref{fig:robust_delight} (Section~\ref{sec:compute_efficiency}) by showing delight noise robustness on an absolute rather than relative scale.
The qualitative pattern is the same: DG tolerates substantial noise before degrading; DG-K is more fragile.
The relative-scale figure in the main text is more interpretable for the approximate-delight argument, since the noise level is normalized by the signal magnitude.

\begin{figure}[ht!]
\centering
\includegraphics[width=0.6\columnwidth]{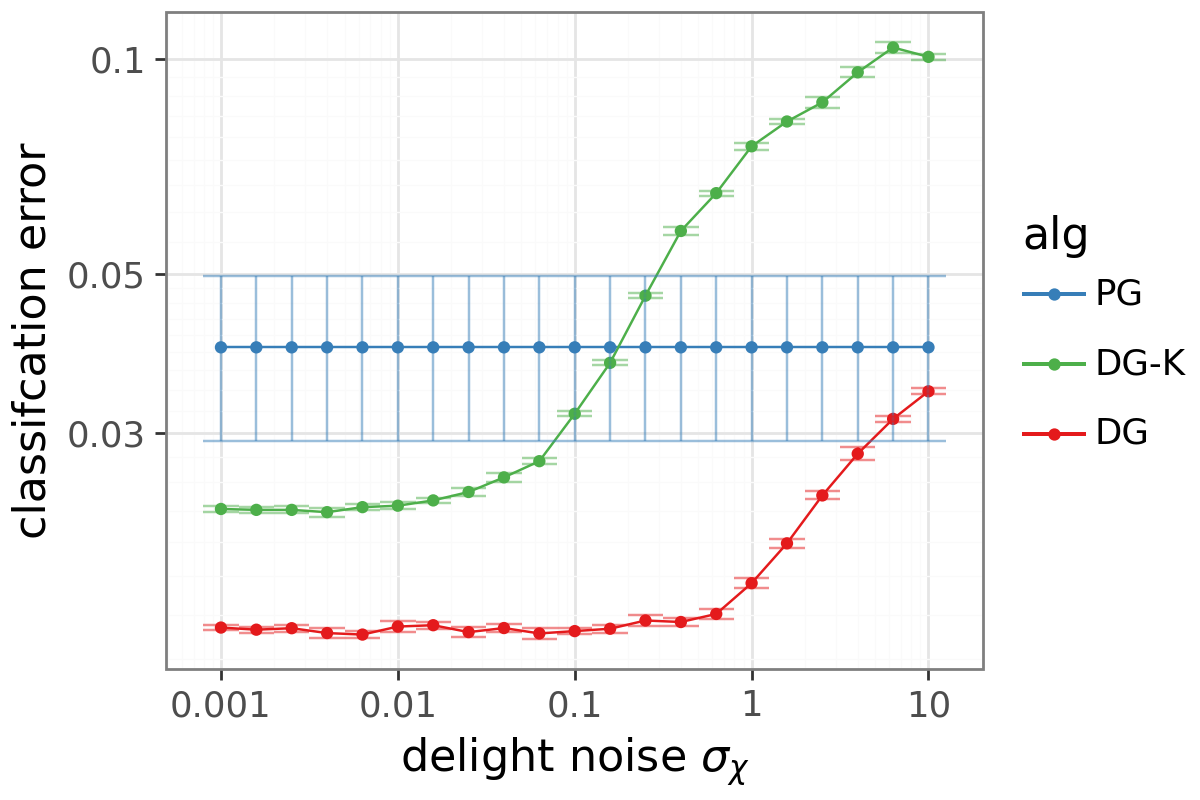}
\caption{Delight noise robustness (absolute scale): classification error vs.\ absolute noise $\sigma_\chi$.
Complements Figure~\ref{fig:robust_delight}; same qualitative pattern on an absolute scale.}
\label{fig:robust_delight_absolute}
\end{figure}

\section{Derivation of the Gate Weight}
\label{app:gate_derivation}

We derive the closed-form gate weight used in Algorithm~\ref{alg:kondo} (Section~\ref{sec:kondo_implementation}).

The meta-learner's objective for a single sample is
\[
f(w) = \delight \, w - \lambda \, w + \mix \, H(w), \qquad w \in [0,1],
\]
where $H(w) = -w \log w - (1{-}w) \log(1{-}w)$ is binary entropy.
Differentiating and setting to zero:
\[
f'(w) = (\delight - \lambda) + \mix \big(\log(1{-}w) - \log w\big) = 0
\quad \Longrightarrow \quad
\frac{\delight - \lambda}{\mix} = \log \frac{w}{1-w}.
\]
The right-hand side is the logit of $w$, so inverting gives $w^* = \sigma\!\big((\delight - \lambda)/\mix\big)$.
Since $H$ is strictly concave and the linear terms preserve concavity, $f$ is strictly concave, and the maximizer is unique.

\section{Tabular Analysis}
\label{app:proofs}

This appendix provides the geometric setup and proofs for the three propositions in Section~\ref{sec:bandit}: the Kondo gate Pareto improvement (Proposition~\ref{prop:pareto_formal}), delight sign-consistency (Proposition~\ref{prop:delight_dominance}), and the gambling pathology (Proposition~\ref{prop:gambling}).
All three share a common bandit setup and geometry lemma, stated first.

\subsection{Setup and Geometry Lemma}

The following assumption underpins all three propositions.

\begin{assumption}[Symmetric $K$-armed bandit]
\label{assump:bandit}
$K$ arms ($K \ge 3$), a single correct arm $y^*$.
Deterministic reward $R = \mathbb{I}\{A = y^*\}$ (Section~\ref{sec:gambling_proof} extends to stochastic).
Softmax policy with success probability $p := \pi(y^*) \in (0,1)$ and uniform incorrect: $\pi(a) = (1{-}p)/(K{-}1)$ for $a \neq y^*$.
Baseline $b \in (0,1)$.
Score $\phi_\pi(a) := e_a - \pi$ (logit-space gradient of $\log \pi(a)$).
Batch size $B$; normalized step $z^+ = z + \alpha\, \bar g / \|\bar g\|$.
\end{assumption}

Each sample yields a per-sample gradient $g(a) = U(a)\,\phi_\pi(a)$, where the advantage takes two values:
\begin{equation}
\label{eq:advantages}
U(y^*) = 1 - b > 0, \qquad U(a \neq y^*) = -b < 0.
\end{equation}

Correct-action gradients carry pure signal; incorrect-action gradients carry mostly noise.

\begin{lemma}[Softmax gradient geometry]
\label{lem:geometry}
Under Assumption~\ref{assump:bandit}, the true gradient is $\nabla_z J = p\,\phi_\pi(y^*)$.
Write $\Pi_\perp$ for orthogonal projection away from $\nabla_z J$, and $\Var_\perp(g) := \E\|\Pi_\perp(g - \E g)\|^2$.
\begin{enumerate}
\item \textbf{Correct action:}
    $\phi_\pi(y^*)$ is a positive scalar multiple of $\nabla_z J$ with $\|\phi_\pi(y^*)\| = \Theta(1-p)$.
    $\Pi_\perp(\phi_\pi(y^*)) = 0$.
\item \textbf{Incorrect action:}
    $\|\phi_\pi(a)\| = \Theta(1)$ and the cosine with $\nabla_z J$ is $\Theta(p)$:
    \[
    \frac{|\langle \phi_\pi(a),\, \nabla_z J \rangle|}{\|\phi_\pi(a)\| \cdot \|\nabla_z J\|} = \Theta(p).
    \]
    Each incorrect-action gradient term has $\Theta(1)$ perpendicular noise and only an $O(p)$ fraction aligned with the true gradient.
\end{enumerate}
\end{lemma}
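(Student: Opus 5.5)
The whole lemma reduces to explicit coordinate computations in logit space, and I would carry them out in three stages.

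\textbf{Gradient formula.} I would first compute the true gradient. With $J(z) = \sum_a \pi(a) R(a) = \pi(y^*)$, the softmax derivative gives $\nabla_z \pi(y^*) = \pi(y^*)(e_{y^*} - \pi) = p\,\phi_\pi(y^*)$. Equivalently, the score-function identity gives $\nabla_z J = \E[R\,\phi_\pi(A)]$, and only $A = y^*$ contributes. The baseline does not matter, since $\E[\phi_\pi(A)] = \sum_a \pi(a)(e_a - \pi) = 0$.

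\textbf{Correct action (part 1).} The positive multiple is immediate: $\phi_\pi(y^*) = \nabla_z J / p$. Hence $\Pi_\perp \phi_\pi(y^*) = 0$.

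For the norm, I would write $\phi_\pi(y^*)$ in coordinates: it is $1-p$ on $y^*$ and $-(1-p)/(K-1)$ on each of the other $K-1$ arms. Therefore
\[
\|\phi_\pi(y^*)\|^2 = (1-p)^2\Big(1 + \tfrac{1}{K-1}\Big) = (1-p)^2 \tfrac{K}{K-1},
\]
so $\|\phi_\pi(y^*)\| = \Theta(1-p)$, with constants in $[1, \sqrt{3/2}]$ for $K \ge 3$.

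\textbf{Incorrect action (part 2).} For $a \neq y^*$, write $q = (1-p)/(K-1)$. The coordinates of $\phi_\pi(a)$ are $1-q$ at $a$, $-p$ at $y^*$, and $-q$ at each of the remaining $K-2$ arms. This gives
\[
\|\phi_\pi(a)\|^2 = (1-q)^2 + p^2 + (K-2)q^2 .
\]
Since $K \ge 3$, we have $q \le 1/2$, so the norm is at least $1/2$. Every coordinate is bounded by $1$ and $\|\phi\|^2 \le 2$, so the norm is $\Theta(1)$ uniformly in $p$ and $K$.

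For the inner product, I would compute $\langle \phi_\pi(a), \phi_\pi(y^*)\rangle = \langle e_a - \pi, e_{y^*} - \pi\rangle = -\pi(a) - p + \|\pi\|^2$. Using
\[
\|\pi\|^2 = p^2 + (1-p)^2/(K-1) = p^2 + (1-p)q
\]
and $\pi(a) = q$, this simplifies to
\[
-q + (1-p)q - p + p^2 = -p(q + 1 - p) = -p(1-p)\tfrac{K}{K-1}.
\]

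\textbf{Cosine.} The cosine with $\nabla_z J$ is the cosine with $\phi_\pi(y^*)$, up to sign. Dividing the inner product by the two norms gives
\[
\frac{p(1-p)K/(K-1)}{\Theta(1)\cdot(1-p)\sqrt{K/(K-1)}} = \Theta(p).
\]
The factor $(1-p)$ cancels, which is the one point to check carefully: it ensures the $\Theta(p)$ is uniform even as $p \to 1$.

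\textbf{Perpendicular noise.} The perpendicular component of $\phi_\pi(a)$ has squared norm $\|\phi\|^2(1-\cos^2) = \Theta(1)$ whenever $p$ is bounded away from $1$. For $p$ near $1$ the cosine is close to $1$ but strictly less than $1$, so I would state the ``$\Theta(1)$ perpendicular'' claim in the small-$p$ regime of interest. I would also note the explicit residual $\|\phi\|^2 - p^2(1-p)^2 K/((K-1)\|\phi_\pi(y^*)\|^2)$.

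\textbf{Main obstacle.} No step is hard. The only care needed is making the $\Theta$ constants uniform in $K \ge 3$ and $p \in (0,1)$.
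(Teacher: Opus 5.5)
Your proposal is correct and follows the same route as the paper: you compute $\|\phi_\pi(y^*)\|$, $\|\phi_\pi(a)\|$ and $\langle \phi_\pi(a), \phi_\pi(y^*)\rangle$ explicitly in coordinates, then take the ratio. It is more careful than the paper's write-up in two places. You keep the $-\pi(a)$ term that the paper's intermediate expression $-p + \|\pi\|^2$ drops; the stated value $-p(1-p)K/(K-1)$ is only correct with that term. You also track the $(1-p)$ factors that the paper silently treats as constants.

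One side remark is wrong, though only in the harmless direction. The cosine does not approach $1$ as $p \to 1$. By your own formula it tends to $\sqrt{K/(2(K-1))} \le \sqrt{3}/2$. Your residual is also missing a factor $K/(K-1)$; corrected, it equals $\|\phi_\pi(a)\|^2 - p^2 K/(K-1) = (K-2)/(K-1)$ exactly. To see this, write $\phi_\pi(a) = -p\,u + v$, where $u = e_{y^*} - \frac{1}{K-1}\sum_{b \ne y^*} e_b$ is parallel to $\nabla_z J$ and $v = e_a - \frac{1}{K-1}\sum_{b \ne y^*} e_b$ is orthogonal to $u$. So the perpendicular component is $\Theta(1)$ uniformly in $p \in (0,1)$ and $K \ge 3$, and your small-$p$ restriction is unnecessary.
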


\begin{proof}
\medskip\noindent\textbf{Correct action.}
$\phi_\pi(y^*) = e_{y^*} - \pi$.
Writing $p_a := (1{-}p)/(K{-}1)$ for the uniform incorrect probability:
$\|\phi_\pi(y^*)\|^2 = (1-p)^2 + (K{-}1) \cdot p_a^2 = (1-p)^2 \cdot K/(K{-}1) = \Theta((1-p)^2)$.
Since $\nabla_z J = p\,\phi_\pi(y^*)$, $\phi_\pi(y^*)$ is a positive scalar multiple of $\nabla_z J$, so $\Pi_\perp(\phi_\pi(y^*)) = 0$.

\medskip\noindent\textbf{Incorrect action.}
For $a \neq y^*$: $\phi_\pi(a) = e_a - \pi$ has $\|\phi_\pi(a)\|^2 = 1 - 2p_a + \|\pi\|^2 = \Theta(1)$.
The inner product with $\phi_\pi(y^*)$ is:
\[
\langle \phi_\pi(a), \phi_\pi(y^*) \rangle
= \langle e_a - \pi,\, e_{y^*} - \pi \rangle
= -p + \|\pi\|^2
= -\frac{p(1-p)K}{K{-}1}.
\]
So $\langle \phi_\pi(a), \nabla_z J \rangle = -p^2(1{-}p)\,K/(K{-}1) = -\Theta(p^2)$.
The cosine is $|-\Theta(p^2)| / (\Theta(1) \cdot \Theta(p)) = \Theta(p)$.
Since $\cos(\phi_\pi(a), \nabla_z J) = \Theta(p) \ll 1$, the perpendicular component is $\Theta(1)$.
\end{proof}

\begin{remark}[The arithmetic of noise]
\label{rem:arithmetic}
A batch of $B$ samples contains $\sim pB$ correct draws and $\sim (1-p)B$ incorrect draws.
The signal (parallel to $\nabla_z J$) scales as $\Theta(pB)$; the noise (perpendicular) scales as $\Theta(\sqrt{B})$.
When $p^2 B \ll 1$: $\cos(\bar g, \nabla_z J) \approx p\sqrt{B}$; the gradient is nearly random unless $B \gg 1/p^2$.
When $p^2 B \gg 1$: signal dominates noise.
\end{remark}

\subsection{Proof of Proposition~\ref{prop:pareto_formal}}

The gate at $\lambda = 0$ skips whenever $\delight < 0$.
Since $\surp > 0$ always, $\delight < 0$ iff $U < 0$ iff the action was incorrect.
The gate keeps only correct-action samples.

\begin{proof}
\medskip\noindent\textbf{Part 1: Direction.}
The gate fires iff $A = y^*$ (probability $p$).
Every kept term is $(1{-}b)\,\phi_\pi(y^*)$.
$\E[g_\mathrm{KG}] = p(1{-}b)\,\phi_\pi(y^*) = (1{-}b)\,\nabla_z J$.
Under normalized steps, the scale $(1{-}b)$ does not affect direction.

\medskip\noindent\textbf{Part 2: Variance.}
Every kept term is the same vector $(1{-}b)\,\phi_\pi(y^*)$.
Zero variance in every direction; in particular, $\Var_\perp = 0$.
For PG, each incorrect sample contributes $\|\Pi_\perp(-b\,\phi_\pi(a))\|^2 = b^2 \cdot \Theta(1)$, arriving with probability $1 - p$.
Per-sample: $\Var_\perp(g_\mathrm{PG}) = (1{-}p) \cdot b^2 \cdot \Theta(1)$.

\medskip\noindent\textbf{Part 3: Cost.}
$\Pr(\text{gate fires}) = p$.
Expected backward passes: $pB$.

\medskip\noindent\textbf{Part 4: Alignment.}
The KG batch gradient (given at least one correct draw) is $\bar g_\mathrm{KG} = (1{-}b)\,\phi_\pi(y^*)$, deterministic in direction.
$\cos(\bar g_\mathrm{KG}, \nabla_z J) = 1$.
PG's batch cosine is $\Theta(p\sqrt{B})$ (Remark~\ref{rem:arithmetic}), requiring $B = \Omega(1/p^2)$ to approach $1$.
\end{proof}

\subsection{Proof of Proposition~\ref{prop:delight_dominance}}

We separate sign consistency (Part~1) from the additive failure mode (Part~2).

\begin{proof}
\medskip\noindent\textbf{Part 1: Sign consistency.}
$\delight(a) = U(a) \cdot \surp(a)$.
$\surp(a) = -\log \pi(a) > 0$ for any $\pi(a) < 1$.
So $\operatorname{sgn}(\delight) = \operatorname{sgn}(U)$.
$U(y^*) = 1 - b > 0$; $U(a \neq y^*) = -b < 0$.

\medskip\noindent\textbf{Part 2: Additive failure.}
With $b = p$: $U(y^*) = 1-p$, $\surp(y^*) = -\log p$, $U(a) = -p$, $\surp(a) = \log((K{-}1)/(1{-}p))$.
The additive scores are:
\begin{align*}
f_\alpha(y^*) &= \alpha(1{-}p) - (1{-}\alpha)\log p, \\
f_\alpha(a) &= -\alpha p + (1{-}\alpha)\log\!\frac{K{-}1}{1{-}p}\,.
\end{align*}
Taking the difference and simplifying:
\begin{align*}
f_\alpha(y^*) - f_\alpha(a)
&= \alpha - (1{-}\alpha)\log\!\frac{p(K{-}1)}{1{-}p}
= \alpha - (1{-}\alpha)\,L.
\end{align*}
This is positive iff $\alpha > L/(1 + L) = \alpha^*$.
When $p \le 1/K$: $L \le 0$ and separation holds for all $\alpha$.
When $p > 1/K$: $L > 0$ and $\alpha$ must exceed $\alpha^*$.

$\alpha^*$ grows with both action-space size and policy quality:
\begin{center}
\begin{tabular}{ccc}
\toprule
$(K, p)$ & $L = \log(p(K{-}1)/(1{-}p))$ & $\alpha^*$ \\
\midrule
$(10,\; 0.5)$ & $\log 9 \approx 2.2$ & $0.69$ \\
$(100,\; 0.5)$ & $\log 99 \approx 4.6$ & $0.82$ \\
$(100,\; 0.9)$ & $\log 891 \approx 6.8$ & $0.87$ \\
$(50000,\; 0.5)$ & $\log 49999 \approx 10.8$ & $0.92$ \\
\bottomrule
\end{tabular}
\end{center}
As $K$ or $p$ grows, $\alpha$ must approach pure advantage, losing the information axis of delight.
\end{proof}

\subsection{Proof of Proposition~\ref{prop:gambling}}
\label{sec:gambling_proof}

We extend the bandit to stochastic rewards: arm~1 pays $\mu^*$ deterministically, arm~2 pays $R_2 \sim \mathcal{N}(\mu^* - \Delta, \sigma^2)$.
Policy $\pi(1) = 1 - \varepsilon$; baseline $b = V^\pi = \mu^* - \varepsilon\Delta$.

\begin{proof}
\medskip\noindent\textbf{Part 1: Reliable regime.}
$U_2 \mid A{=}2$ is Gaussian with mean $-(1{-}\varepsilon)\Delta$ and variance $\sigma^2$.
By the Gaussian tail bound:
$\Pr(U_2 > 0 \mid A{=}2) = \Pr(R_2 > b) \le \exp(-(1{-}\varepsilon)^2 \Delta^2 / (2\sigma^2))$.

\medskip\noindent\textbf{Part 2: Pathological regime.}
$\Pr(U_2 > 0 \mid A{=}2) = 1 - \Phi\big((1{-}\varepsilon)\Delta/\sigma\big)$.
When $\sigma \gg \Delta$, the argument is $o(1)$, so this probability is bounded away from $0$: $\Pr(U_2 > 0 \mid A{=}2) = \Theta(1)$.

\medskip\noindent\textbf{Part 3: Amplification.}
$\{U_2 > 0\} = \{R_2 > b\}$ depends only on the reward distribution, not the priority signal.
Given $U_2 > 0$, advantage-priority assigns weight $|U_2|$, while delight assigns $|U_2| \cdot \surp_2$.
Since $\surp_2 = \log(1/\varepsilon) \to \infty$ as $\varepsilon \to 0$, delight inflates the false positive by a factor that grows as the policy improves.
\end{proof}

\begin{remark}[An environmental limit, not an algorithmic flaw]
\label{rem:limit}
No per-sample statistic computed from $(R, \pi)$ can distinguish a genuine breakthrough from a lucky draw.
The distinction requires either multi-trial statistics or a learned reward model.
The gambling pathology is a property of the reward structure ($\sigma/\Delta \gg 1$), and every advantage-based gate inherits it.
\end{remark}

\section{Token Reversal}
\label{app:token_reversal}

We provide experimental details and supplementary scaling metrics for the token reversal experiments in Section~\ref{sec:results}.

\subsection{Experimental Details}
\label{app:token_reversal_details}

\paragraph{Architecture.}
The agent is a decoder-only Transformer with causal attention, model dimension $d_{\text{model}} = 64$, 2 layers, and 2 attention heads.
The architecture is identical to the one used in the companion paper~\citep{osband2025delightful}.

\paragraph{Environment.}
The token reversal task presents a prompt of $H$ tokens drawn uniformly from a vocabulary of size $M$; the agent must output the tokens in reverse order.
Each token position is scored independently: $r_h = \mathbb{I}\{a_h = y_h\}$, giving a per-episode reward $R = \sum_h r_h / H \in [0, 1]$.
We use reward shaping $\kappa = 1$, which rescales the reward to $[0, 1]$ linearly.

\paragraph{Baseline.}
All methods use a grouped empirical baseline: each batch consists of $P = 10$ prompts with $S = 10$ sampled responses each, and the baseline for each prompt is the mean reward across its responses.
This is analogous to GRPO~\citep{shao2024deepseekmath}, which also estimates the baseline from within-prompt samples; any other value function could be substituted.

\paragraph{Optimization.}
All methods use Adam with learning rate $3 \times 10^{-4}$.
Training runs for $K$ gradient steps with batch size $P \times S = 100$ episodes per step.
For the learning curve experiments (Figure~\ref{fig:token_reversal}), $K = 3{,}000$; for the scaling sweeps (Figures~\ref{fig:scaling_vocab}--\ref{fig:scaling_length}), $K = 1{,}000$.
All figures average over 10 seeds.

\paragraph{Kondo gate.}
We test the Kondo gate at fixed gate rates $\rho \in \{0.03, 0.05, 0.1, 0.2, 0.5, 1.0\}$ and in adaptive mode ($\lambda = 0$, variable $\rho$).
The priority signal for screening is delight by default; we also compare advantage-only, surprisal-only, uniform (random subsampling), and additive $\alpha U + (1{-}\alpha)\ell$ with $\alpha \in \{0.0, 0.25, 0.5, 0.75, 1.0\}$.

\paragraph{Baselines.}
We compare against three standard reinforcement learning methods within our codebase:
PG (importance-weighted REINFORCE with no clipping), PPO~\citep{schulman2017proximal} with $\epsilon = 0.2$ and $\beta_{\text{KL}} = 0$, and PMPO~\citep{abdolmaleki2024preference} with $\alpha = 1$ and $\beta_{\text{KL}} = 0$.
All baselines use identical architecture, optimizer, and grouped baseline.

\paragraph{Scaling protocol.}
For the vocab scaling sweep (Figure~\ref{fig:scaling_vocab}), we fix $H = 10$ and sweep $M \in \{2, 4, 8, 16, 32, 64\}$.
For the length scaling sweep (Figure~\ref{fig:scaling_length}), we fix $M = 2$ and sweep $H \in \{2, 4, 6, 8, \ldots, 30\}$.
A problem is considered solved if the average reward over training exceeds $0.75$.
The main-body figures report $M^*$ (largest $M$ solved) and $H^*$ (largest $H$ solved) as functions of forward or backward compute.

\subsection{Average Error Scaling}
\label{app:average_error}

The main-body scaling figures (Figures~\ref{fig:scaling_vocab}--\ref{fig:scaling_length}) report the largest problem solved as a function of compute.
A complementary view is the average error across all problem sizes at a fixed compute budget, which captures how gracefully each method degrades as problems exceed its capacity.

Figure~\ref{fig:length_ave_error} plots average error against sequence length $H$ on a log--log scale.
The regularity of the scaling becomes visible here: all methods trace clean power laws, but the DG family (DG, DG-K) occupies a uniformly lower curve than PG, PPO, and PMPO.
In backward-pass space~(b), DG-K ($\rho = 3\%$) separates dramatically from the pack, achieving the same average error as full DG at a small fraction of the backward cost.

\begin{figure}[ht!]
\centering
\begin{subfigure}[t]{0.48\columnwidth}
    \centering
    \includegraphics[width=\linewidth]{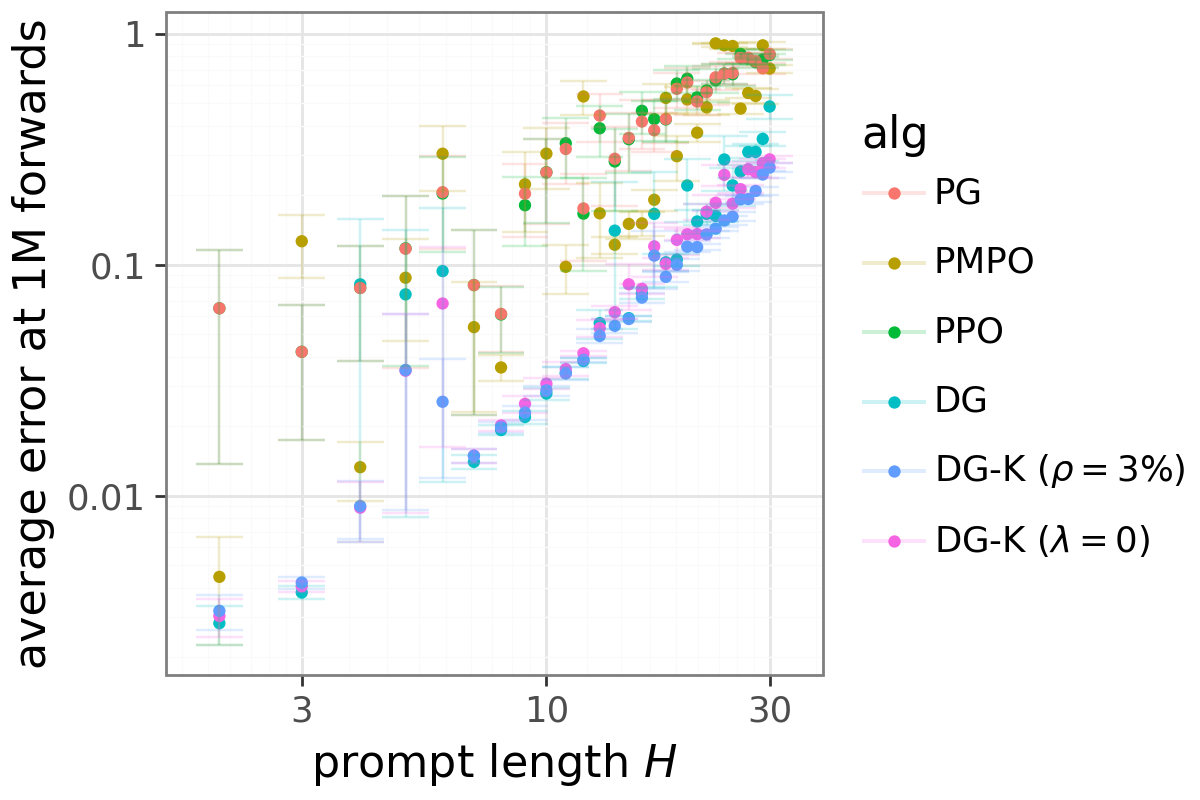}
    \caption{Forward passes.}
\end{subfigure}
\hfill
\begin{subfigure}[t]{0.48\columnwidth}
    \centering
    \includegraphics[width=\linewidth]{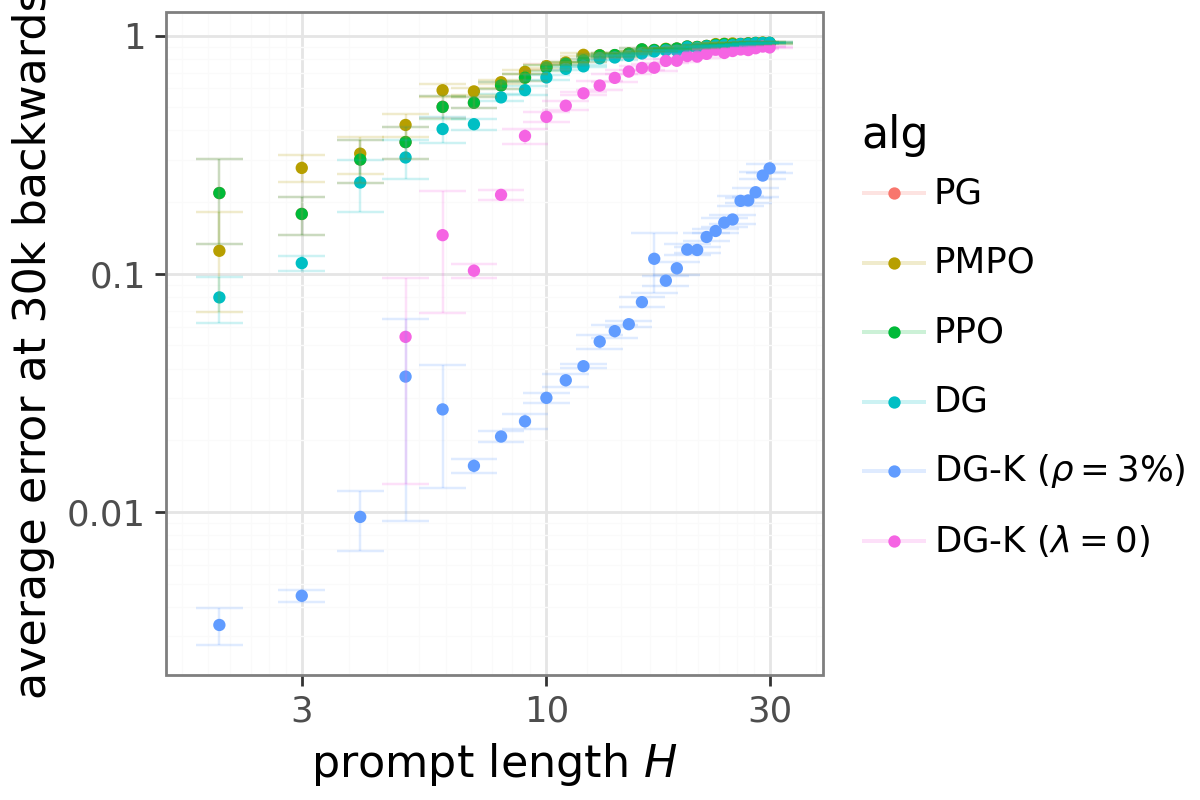}
    \caption{Backward passes.}
\end{subfigure}
\caption{Average error vs.\ sequence length $H$ (log--log, $M = 2$).
Clean power laws emerge; DG-K compresses the backward axis.}
\label{fig:length_ave_error}
\end{figure}

Figure~\ref{fig:vocab_ave_error} shows the same comparison over vocabulary size $M$.
The same power-law regularity holds: DG and DG-K ($\lambda = 0$) trace the lowest curves in forward-pass space, while DG-K ($\rho = 3\%$) dominates in backward-pass space.
As $M$ grows and informative tokens become rarer, the gate's backward-pass advantage widens.

\begin{figure}[ht!]
\centering
\begin{subfigure}[t]{0.48\columnwidth}
    \centering
    \includegraphics[width=\linewidth]{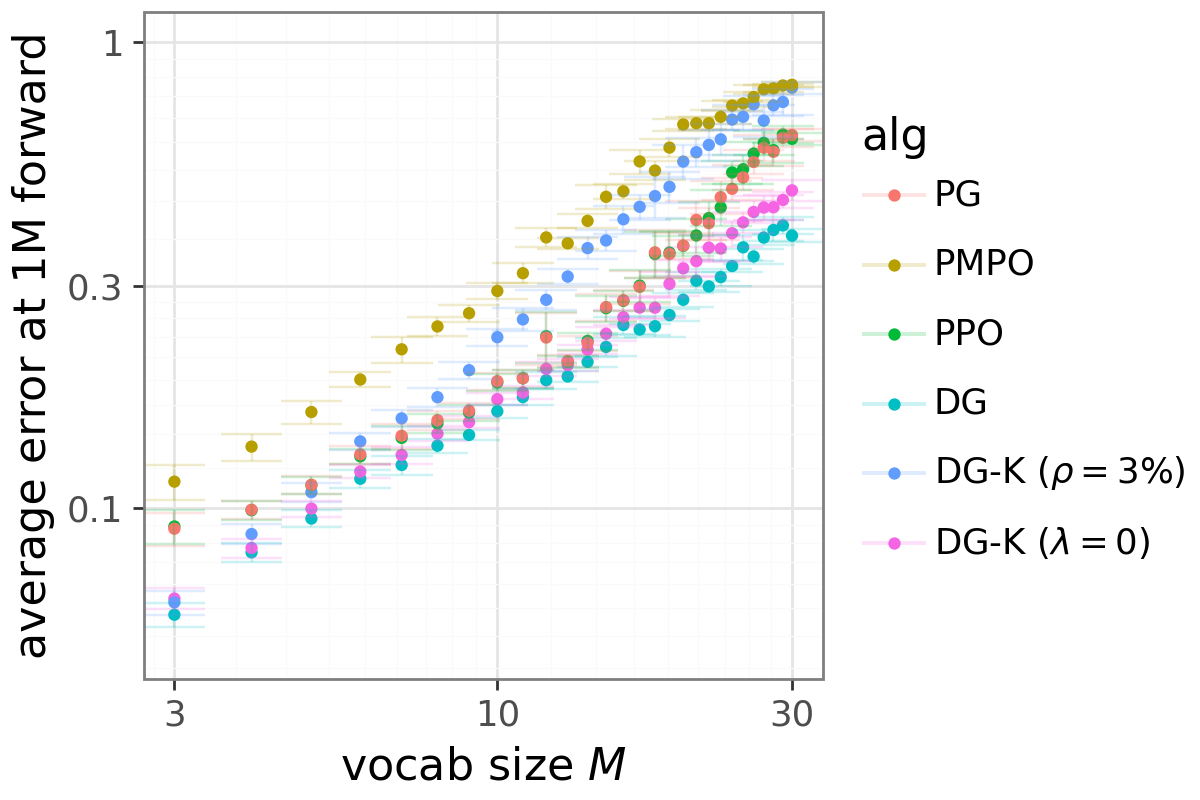}
    \caption{Forward passes.}
\end{subfigure}
\hfill
\begin{subfigure}[t]{0.48\columnwidth}
    \centering
    \includegraphics[width=\linewidth]{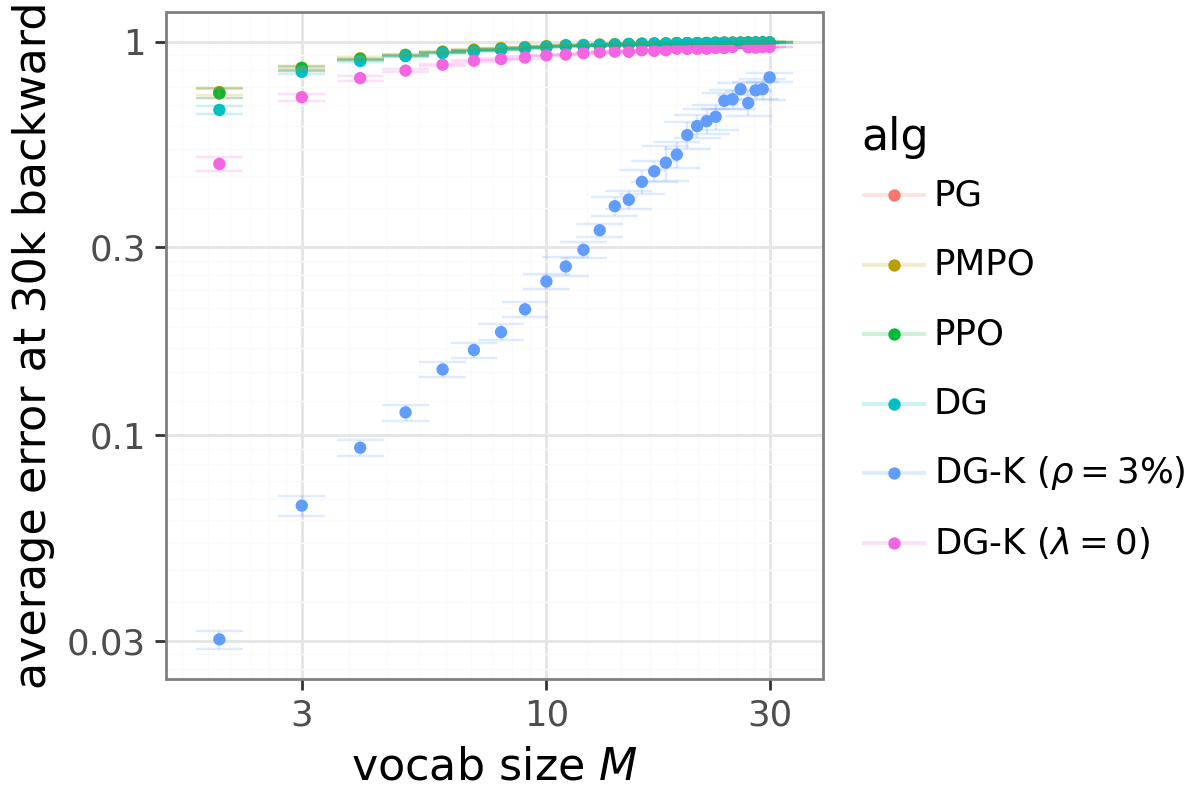}
    \caption{Backward passes.}
\end{subfigure}
\caption{Average error vs.\ vocabulary size $M$ (log--log, $H = 10$).
The gate's backward-pass advantage widens with $M$.}
\label{fig:vocab_ave_error}
\end{figure}

\subsection{Final Error Scaling}
\label{app:final_error}

The average error view aggregates performance across the full training trajectory.
An alternative is to examine the final error at a fixed compute budget, which isolates where each method converges given a finite allocation.

Figure~\ref{fig:length_final} plots final error against sequence length $H$.
DG and both DG-K variants maintain near-zero final error across a wide range of $H$, while PG, PPO, and PMPO degrade steadily.
In backward-pass space~(b), the advantage is stark: DG-K ($\rho = 3\%$) stays near zero across almost the entire range, while baselines climb steeply.

\begin{figure}[ht!]
\centering
\begin{subfigure}[t]{0.48\columnwidth}
    \centering
    \includegraphics[width=\linewidth]{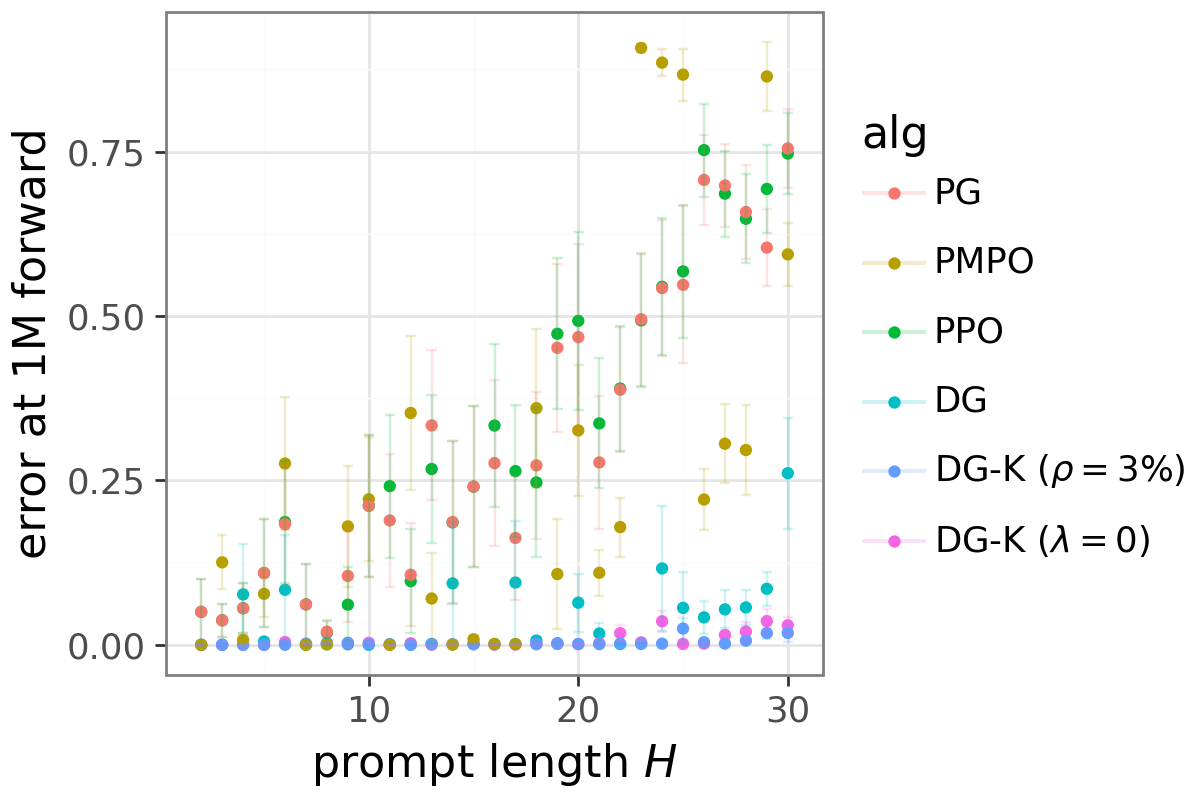}
    \caption{Forward passes.}
\end{subfigure}
\hfill
\begin{subfigure}[t]{0.48\columnwidth}
    \centering
    \includegraphics[width=\linewidth]{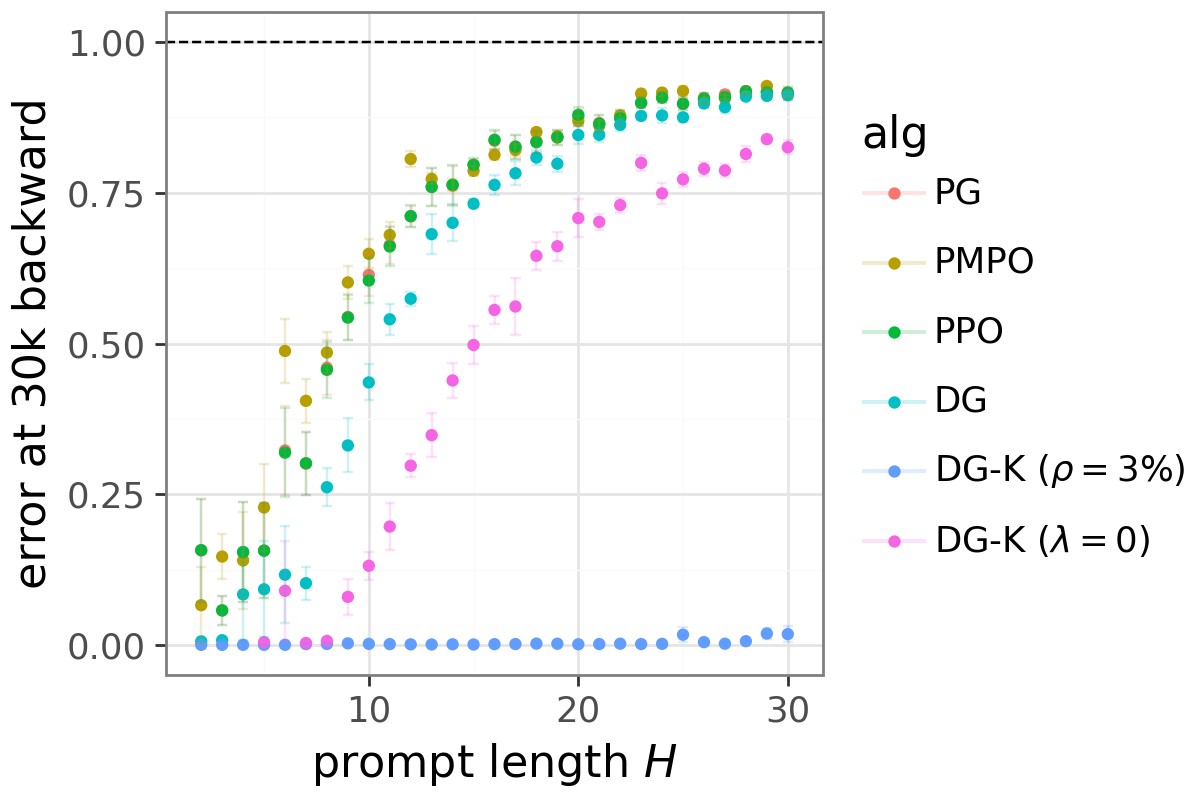}
    \caption{Backward passes.}
\end{subfigure}
\caption{Final error vs.\ sequence length $H$ ($M = 2$).
DG-K stays near zero where baselines degrade.}
\label{fig:length_final}
\end{figure}

Figure~\ref{fig:vocab_final} shows the same over vocabulary size $M$.
The adaptive gate ($\lambda = 0$) tracks full DG faithfully, while the fixed gate ($\rho = 3\%$) degrades at large $M$ where a fixed 3\% budget becomes too aggressive.
In backward-pass space, both Kondo variants still outperform all baselines across the full range.

\begin{figure}[ht!]
\centering
\begin{subfigure}[t]{0.48\columnwidth}
    \centering
    \includegraphics[width=\linewidth]{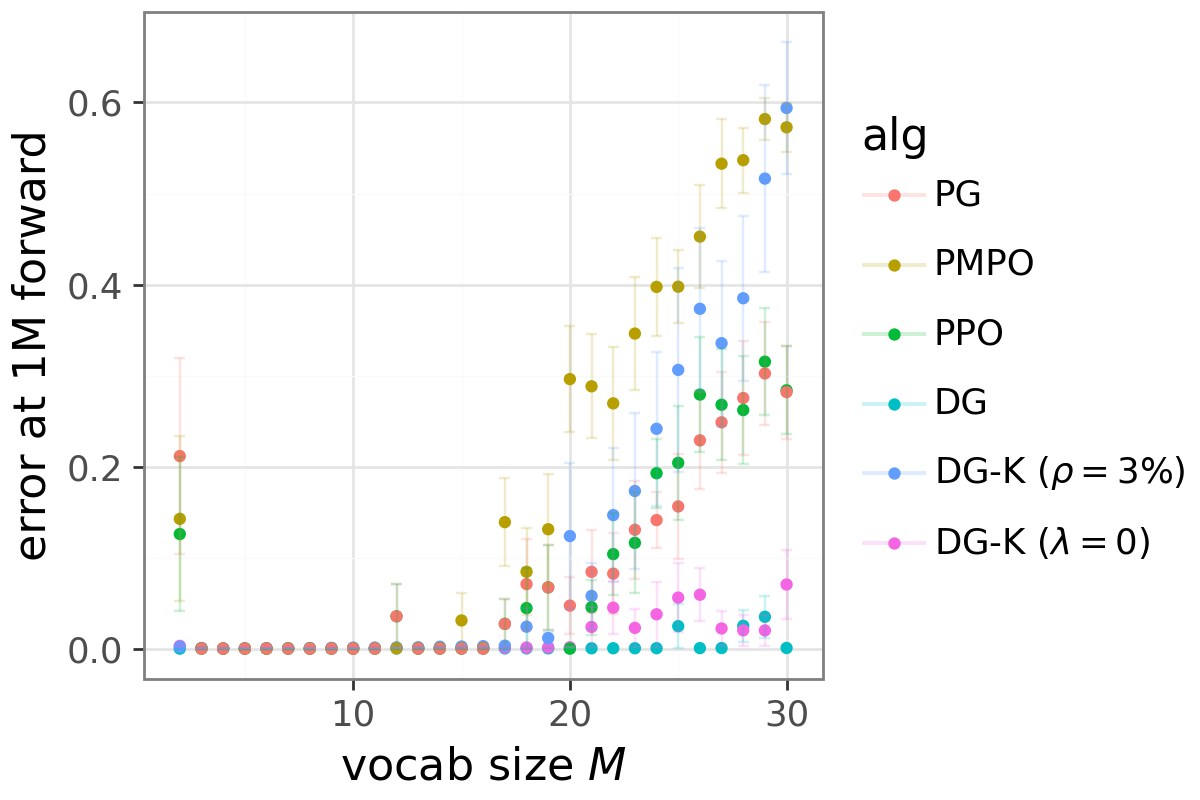}
    \caption{Forward passes.}
\end{subfigure}
\hfill
\begin{subfigure}[t]{0.48\columnwidth}
    \centering
    \includegraphics[width=\linewidth]{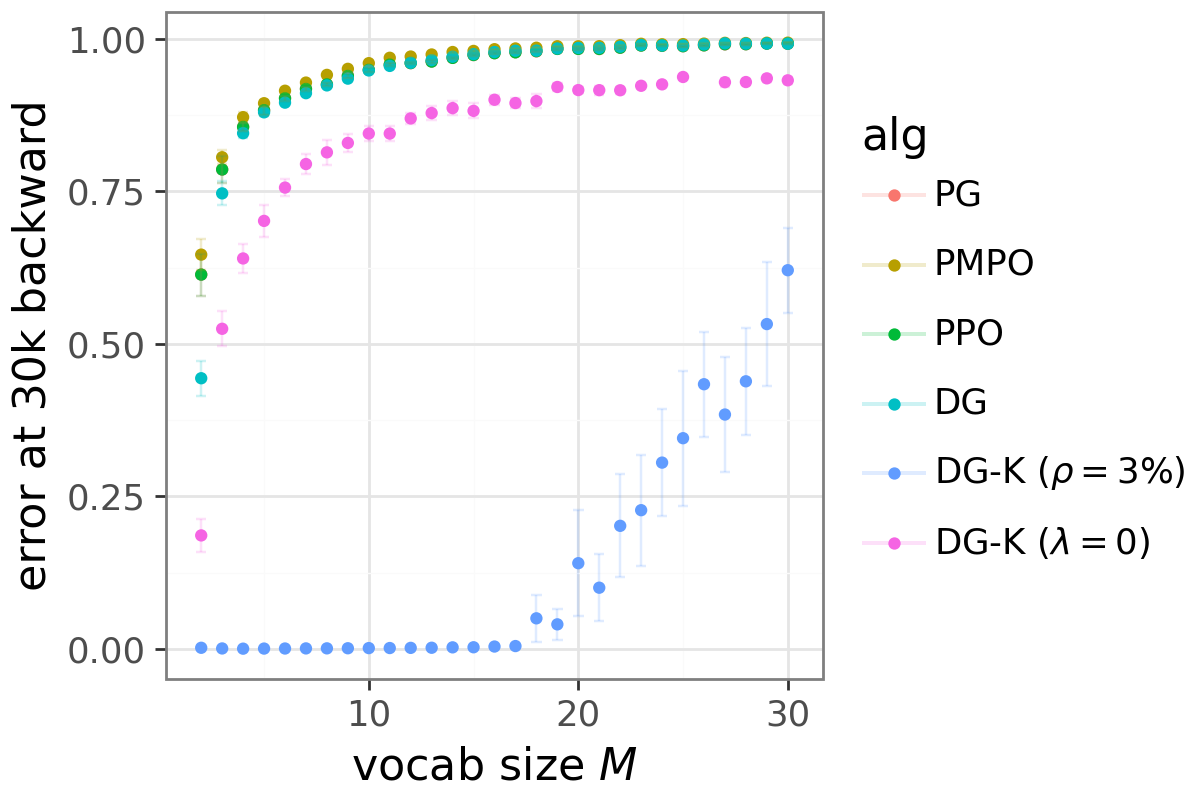}
    \caption{Backward passes.}
\end{subfigure}
\caption{Final error vs.\ vocabulary size $M$ ($H = 10$).
Adaptive gate tracks DG; fixed gate degrades at large $M$ but dominates in backward-pass space.}
\label{fig:vocab_final}
\end{figure}

\end{document}

%% file: macros.tex
\usepackage{amsmath}
\usepackage{amssymb}
\usepackage{algorithm}
\usepackage[noend]{algpseudocode}
\usepackage{mathrsfs}
\usepackage{dsfont}
\usepackage{array}
\usepackage{bbm}
\usepackage[mathscr]{eucal}
\usepackage{psfrag}
\usepackage{here}
\usepackage{wasysym}
\usepackage[dvipsnames, table]{xcolor}  
\usepackage{caption}                     
\usepackage{subcaption}
\usepackage{amsthm}
\usepackage{todonotes}
\usepackage{tabulary}
\usepackage{outlines}
\usepackage{thmtools, thm-restate}

\newtheorem{proposition} {Proposition}
\newtheorem{lemma} {Lemma}

\newtheorem{assumption} {Assumption}

\newtheorem{remark} {Remark}

\newcommand{\E}{\mathbb{E}}

\newcommand{\Var}{\mathrm{Var}}

\newcommand{\surp}{\ell} 
\newcommand{\delight}{\chi} 

\newcommand{\mix}{\eta} 

\definecolor{ian_highlight}{RGB}{100, 2, 2}

\errorcontextlines\maxdimen

\makeatletter
    \newcommand*{\algrule}[1][\algorithmicindent]{\makebox[#1][l]{\hspace*{.5em}\thealgruleextra\vrule height \thealgruleheight depth \thealgruledepth}}%
\newcommand*{\thealgruleextra}{}
\newcommand*{\thealgruleheight}{.75\baselineskip}
\newcommand*{\thealgruledepth}{.25\baselineskip}

\newcount\ALG@printindent@tempcnta
\def\ALG@printindent{%
    \ifnum \theALG@nested>0
        \ifx\ALG@text\ALG@x@notext
        \else
            \unskip
            \addvspace{-1pt}
            \ALG@printindent@tempcnta=1
            \loop
                \algrule[\csname ALG@ind@\the\ALG@printindent@tempcnta\endcsname]%
                \advance \ALG@printindent@tempcnta 1
            \ifnum \ALG@printindent@tempcnta<\numexpr\theALG@nested+1\relax
            \repeat
        \fi
    \fi
    }%
\usepackage{etoolbox}
\makeatother

\newbox\statebox
\newcommand{\myState}[1]{%
    \setbox\statebox=\vbox{#1}%
    \edef\thealgruleheight{\dimexpr \the\ht\statebox+1pt\relax}%
    \edef\thealgruledepth{\dimexpr \the\dp\statebox+1pt\relax}%
    \ifdim\thealgruleheight<.75\baselineskip
        \def\thealgruleheight{\dimexpr .75\baselineskip+1pt\relax}%
    \fi
    \ifdim\thealgruledepth<.25\baselineskip
        \def\thealgruledepth{\dimexpr .25\baselineskip+1pt\relax}%
    \fi
    \State #1%
    \def\thealgruleheight{\dimexpr .75\baselineskip+1pt\relax}%
    \def\thealgruledepth{\dimexpr .25\baselineskip+1pt\relax}%
}

%% file: references.bib
@article{abdolmaleki2024preference,
  author = {Abdolmaleki, Abbas and Piot, Bilal and Shahriari, Bobak and Springenberg, Jost Tobias and Hertweck, Tim and Joshi, Rishabh and Oh, Junhyuk and Bloesch, Michael and Lampe, Thomas and Heess, Nicolas and others},
  title = {Preference optimization as probabilistic inference},
  journal = {arXiv e-prints},
  pages = {arXiv--2410},
  year = {2024},
}

@inproceedings{katharopoulos2018not,
  author = {Katharopoulos, Angelos and Fleuret, Fran\c{c}ois},
  title = {Not All Samples Are Created Equal: Deep Learning with Importance Sampling},
  booktitle = {International Conference on Machine Learning},
  pages = {2525--2534},
  year = {2018},
}

@inproceedings{leviathan2023fast,
  author = {Leviathan, Yaniv and Kalman, Matan and Matias, Yossi},
  title = {Fast Inference from Transformers via Speculative Decoding},
  booktitle = {International Conference on Machine Learning},
  pages = {19274--19286},
  year = {2023},
}

@techreport{osband2025delightful,
  author = {Osband, Ian},
  title = {Delightful Policy Gradient},
  institution = {Google DeepMind},
  number = {gdm/lfg-1},
  year = {2025},
  type = {Technical Report},
}

@techreport{osband2025distributed,
  author = {Osband, Ian},
  title = {Delightful Distributed Policy Gradient},
  institution = {Google DeepMind},
  number = {gdm/lfg-2},
  year = {2025},
  type = {Technical Report},
}

@article{peng2019advantage,
  author = {Peng, Xue Bin and Kumar, Aviral and Zhang, Grace and Levine, Sergey},
  title = {Advantage-weighted regression: Simple and scalable off-policy reinforcement learning},
  journal = {arXiv preprint arXiv:1910.00177},
  year = {2019},
}

@inproceedings{schaul2016per,
  author = {Schaul, Tom and Quan, John and Antonoglou, Ioannis and Silver, David},
  title = {Prioritized Experience Replay},
  booktitle = {International Conference on Learning Representations},
  year = {2016},
}

@article{shao2024deepseekmath,
  author = {Shao, Zhihong and Wang, Peiyi and Zhu, Qihao and Xu, Runxin and Song, Junxiao and Zhang, Mingchuan and Li, Y.K. and Wu, Y. and Guo, Daya},
  title = {{DeepSeekMath}: Pushing the limits of mathematical reasoning in open language models},
  journal = {arXiv preprint arXiv:2402.03300},
  year = {2024},
}

@article{williams1992simple,
  author = {Williams, Ronald J},
  title = {Simple statistical gradient-following algorithms for connectionist reinforcement learning},
  journal = {Machine learning},
  publisher = {Springer},
  volume = {8},
  number = {3},
  pages = {229--256},
  year = {1992},
}

@article{schulman2017proximal,
  author = {Schulman, John and Wolski, Filip and Dhariwal, Prafulla and Radford, Alec and Klimov, Oleg},
  title = {Proximal Policy Optimization Algorithms},
  journal = {arXiv preprint arXiv:1707.06347},
  year = {2017},
}

@inproceedings{schulman2015trust,
  author = {Schulman, John and Levine, Sergey and Abbeel, Pieter and Jordan, Michael and Moritz, Philipp},
  title = {Trust Region Policy Optimization},
  booktitle = {Proc. of ICML},
  year = {2015},
}

@inproceedings{horgan2018distributed,
  author = {Daniel Horgan and John Quan and David Budden and Gabriel Barth-Maron and Matteo Hessel and Hado Van Hasselt and David Silver},
  title = {Distributed Prioritized Experience Replay},
  booktitle = {6th International Conference on Learning Represenations},
  year = {2018},
}

@inproceedings{bengio2009curriculum,
  author = {Bengio, Yoshua and Louradour, J{\'e}r{\^o}me and Collobert, Ronan and Weston, Jason},
  title = {Curriculum learning},
  booktitle = {Proceedings of the 26th annual international conference on machine learning},
  pages = {41--48},
  year = {2009},
  organization = {ACM},
}

@article{graves2017automated,
  author = {Graves, Alex and Bellemare, Marc G and Menick, Jacob and Munos, Remi and Kavukcuoglu, Koray},
  title = {Automated Curriculum Learning for Neural Networks},
  journal = {arXiv preprint arXiv:1704.03003},
  year = {2017},
}

@inproceedings{ouyang2022training,
  author = {Ouyang, Long and Wu, Jeff and Jiang, Xu and Almeida, Diogo and Wainwright, Carroll L and Mishkin, Pamela and Zhang, Chong and Agarwal, Sandhini and Slama, Katarina and Ray, Alex and others},
  title = {Training language models to follow instructions with human feedback},
  booktitle = {Advances in Neural Information Processing Systems},
  volume = {35},
  pages = {27730--27744},
  year = {2022},
}

@article{ziegler2019fine,
  author = {Ziegler, Daniel M and Stiennon, Nisan and Wu, Jeffrey and Brown, Tom B and Radford, Alec and Amodei, Dario and Christiano, Paul and Irving, Geoffrey},
  title = {Fine-tuning language models from human preferences},
  journal = {arXiv preprint arXiv:1909.08593},
  year = {2019},
}

@article{auer2002nonstochastic,
  author = {Auer, Peter and Cesa-Bianchi, Nicolo and Freund, Yoav and Schapire, Robert E},
  title = {The nonstochastic multiarmed bandit problem},
  journal = {SIAM journal on computing},
  publisher = {SIAM},
  volume = {32},
  number = {1},
  pages = {48--77},
  year = {2002},
}
